\algrenewcommand\algorithmicdo{}
\newtheorem{theorem}{Theorem}
\newtheorem{proposition}{Proposition}
\newtheorem{lemma}{Lemma}
\newtheorem{definition}{Definition}
\theoremstyle{definition}
\newtheorem{remark}{Remark}
\newtheorem{assumption}{Assumption}
\DeclareMathOperator{\E}{\mathbb{E}}
\DeclareMathOperator{\dkl}{D_{KL}}
\newcommand{\norm}[1]{\ensuremath{\left\| #1 \right\|}}
\newcommand{\abs}[1]{\ensuremath{{\left\vert #1 \right\vert}}}
\DeclareMathOperator{\indicator}{\mathbb{I}}
\DeclareMathOperator*{\minimize}{minimize}
\DeclareMathOperator{\subjectto}{subject\ to}
\newcommand{\calA}{\ensuremath{\mathcal{A}}}
\newcommand{\calD}{\ensuremath{\mathcal{D}}}
\newcommand{\calH}{\ensuremath{\mathcal{H}}}
\newcommand{\calK}{\ensuremath{\mathcal{K}}}
\newcommand{\calP}{\ensuremath{\mathcal{P}}}
\newcommand{\calS}{\ensuremath{\mathcal{S}}}
\newcommand{\calX}{\ensuremath{\mathcal{X}}}
\newcommand{\calY}{\ensuremath{\mathcal{Y}}}
\newcommand{\calZ}{\ensuremath{\mathcal{Z}}}
\newcommand{\bx}{\ensuremath{\bm{x}}}
\newcommand{\blambda}{\ensuremath{\bm{\lambda}}}
\newcommand{\bmu}{\ensuremath{\bm{\mu}}}
\newcommand{\btheta}{\ensuremath{\bm{\theta}}}
\newcommand{\bbR}{\ensuremath{\mathbb{R}}}
\newcommand{\fkA}{\ensuremath{\mathfrak{A}}}
\newcommand{\fkD}{\ensuremath{\mathfrak{D}}}
\def\st/{\textsuperscript{st}}
\def\nd/{\textsuperscript{nd}}
\def\rd/{\textsuperscript{rd}}
\def\th/{\textsuperscript{th}}
\newcommand{\setR}{\bbR}
\newcommand{\ones}{\ensuremath{\bm{\mathds{1}}}}
\def\nnil{\nil}
\newcounter{prob}
\newenvironment{prob}[1][\nil]{%
	\def\tmp{#1}
	\equation
	\ifx\tmp\nnil
		\refstepcounter{prob}
		\edef\@currentlabel{\theprob}\expandafter\ltx@label\expandafter{P\Roman{prob}-counter-number}
		\tag{P\Roman{prob}}
	\else
		\tag{\tmp}
	\fi
	\aligned%
}{%
	\endaligned\endequation%
}
\newcounter{dual}
\newenvironment{prob*}{%
	\csname equation*\endcsname%
	\aligned%
}{%
	\endaligned%
	\csname endequation*\endcsname%
}
\newcommand{\bhtheta}{\ensuremath{\bm{{\hat{\theta}^\star}}}}
\newcommand{\bhlambda}{\ensuremath{\bm{{\hat{\lambda}^\star}}}}
\newcommand{\blambdat}{\ensuremath{\bm{{\tilde{\lambda}^\star}}}}
\newcommand{\bhmu}{\ensuremath{\bm{{\hat{\mu}^\star}}}}
\newcommand{\bmut}{\ensuremath{\bm{{\tilde{\mu}^\star}}}}
\newif{\ifappend}
\title{{\fontsize{16.7}{0}\selectfont Probably Approximately Correct Constrained Learning}}
\author{%
Luiz~F.~O.~Chamon\\
Dept.\ of Electrical and Systems Engineering\\
University of Pennsylvania\\
Pennsylvania, USA\\
\texttt{luizf@seas.upenn.edu}
\And
Alejandro~Ribeiro\\
Dept.\ of Electrical and Systems Engineering\\
University of Pennsylvania\\
Pennsylvania, USA\\
\texttt{aribeiro@seas.upenn.edu}
}
\begin{document}
\maketitle

\begin{abstract}
As learning solutions reach critical applications in social, industrial, and medical domains, the need to curtail their behavior has become paramount. There is now ample evidence that without explicit tailoring, learning can lead to biased, unsafe, and prejudiced solutions. To tackle these problems, we develop a generalization theory of constrained learning based on the probably approximately correct~(PAC) learning framework. In particular, we show that imposing requirements does not make a learning problem harder in the sense that any PAC learnable class is also PAC \emph{constrained} learnable using a constrained counterpart of the empirical risk minimization~(ERM) rule. For typical parametrized models, however, this learner involves solving a constrained non-convex optimization program for which even obtaining a feasible solution is challenging. To overcome this issue, we prove that under mild conditions the empirical dual problem of constrained learning is also a PAC constrained learner that now leads to a practical constrained learning algorithm based solely on solving unconstrained problems. We analyze the generalization properties of this solution and use it to illustrate how constrained learning can address problems in fair and robust classification.
\end{abstract}

\section{Introduction}
\label{S:intro}

Learning has become a core component of the modern information systems we increasingly rely upon to select job candidates, analyze medical data, and control ``smart'' applications~(home, grid, city). As these systems become ubiquitous, so does the need to curtail their behavior. Left untethered, they can fail catastrophically as evidenced by the growing number of reports involving biased, prejudiced models or systems prone to tampering~(e.g., adversarial examples), unsafe behaviors, and deadly accidents~\cite{Datta15a, Kay15u, Angwin16m, NationalTransportationSafetyBoard18h, Dastin18a, Bright16t}. Typically, learning is constrained by using domain expert knowledge to either construct models that \emph{embed} the required properties~(see, e.g.,~\cite{Ronneberger15u, Cohen16g, Marcos17r, Henriques17w, Sabour17d, Weiler18l, Ruiz20i}) or \emph{tune} the training objective so as to promote them~(see, e.g.,~\cite{Chen14a, Berk17a, Xu18a, Ravi19e}). The latter approach, known as regularization, is ubiquitous in practice even though it need not yield feasible solutions~\cite{Bertsekas09c}. In fact, existing results from classical learning theory guarantee generalization with respect to the regularized objective, which says nothing about meeting the requirements it may describe~\cite{Vapnik00t, Shalev-Shwartz04u}. While the former approach guarantees that the solution satisfies the requirements, the scale and opacity of modern machine learning~(ML) systems render this model design impractical.

Since ML models are often trained using empirical risk minimization~(ERM), an alternative solution is to explicitly add constraints to these optimization problems. Since requirements are often expressed as constraints in the first place, this approach overcomes the need to tune regularization parameters. What it more, any solution automatically satisfies the requirements. Nevertheless, this approach suffers from two fundamental drawbacks. First, its involves solving a constrained optimization problem that is non-convex for typical parametrizations~(e.g., neural networks). Though gradient descent can often be used to obtain good minimizers for differentiable models, it does not guarantee constraint satisfaction. Indeed, there is typically no straightforward way to project onto the feasibility set~(e.g., the set of fair classifiers) and strong duality need not hold for non-convex programs~\cite{Bertsekas09c}. Second, even if we could solve this constrained ERM, the issue remains of how its solutions generalize since classical learning theory is involved only with unconstrained problems~\cite{Vapnik00t, Shalev-Shwartz04u}.

In this work, we address these issues in two steps. We begin by formalizing the concept of constrained learning using the probably approximately correct~(PAC) framework. We prove that any hypothesis class that is unconstrained learnable is constrained learnable and that the constrained counterpart of the ERM rule is a PAC constrained learner. Hence, we establish that, from a learning theoretic perspective, \emph{constrained learning is as hard as unconstrained~(classical) learning}. This, however, does not resolve the practical issue of learning under requirements due to the non-convexity of the constrained ERM problem. To do so, we proceed by deriving an empirical saddle-point problem that is a~(representation-independent) PAC constrained learner. We show that its approximation error depends on the richness of the parametrization and the difficulty of satisfying the learning constraints. Finally, we put forward practical constrained learning algorithm that we use to illustrate how constrained learning can address problems involving fairness and robustness.

\section{Related work}
\label{S:related}

Central to ML is the concept of ERM in which statistical quantities are replaced by their empirical counterparts, thus allowing learning problems to be solved from data, without prior knowledge of its underlying distributions. The set of conditions under which this is a sensible approach is known in learning theory as~(agnostic) PAC learnability. More generally, the PAC framework formalizes what it means to solve a statistical learning problem and studies when it can be done~\cite{Valiant84a, Haussler92d, Vapnik00t, Shalev-Shwartz04u}. While different learning models, such as structured complexity and PAC-Bayes, have been proposed, they are beyond the scope of this work.

The objects studied in~(PAC) learning theory, however, are unconstrained statistical learning problem. Yet, there is a growing need to enable learning under constraints to tackle problems in
fairness~\cite{Goh16s, Kusner17c, Agarwal18a, Donini18e, Kearns18p, Zafar19f, Cotter19o},
robustness~\cite{Madry18t, Sinha18c, Zhang19t},
safety~\cite{Kalogerias18r, Vitt18r, Garcia15a, Achiam17c, Paternain19l},
and semi-supervised learning~\cite{Nguyen08i, Cour11l, Yu17m},
to name a few. While constraints have been used in statistics since Neyman-Pearson~\cite{Neyman33i}, generalization guarantees for constrained learning have been studied only in specific contexts, e.g., for coherence constraints or rate-constrained learning~\cite{Garg01l, Goh16s, Agarwal18a, Cotter19o}. Additionally, due to the non-convexity of typical learning problems, many of these results hold for randomized solutions, e.g., \cite{Goh16s, Kearns18p, Agarwal18a, Cotter19o}. In contrast, this work puts forward a formal constrained learning framework in which generalization results are derived for deterministic learners. A first step in that direction was taken in~\cite{Chamon20t}, albeit from an optimization perspective. This work also accounts for pointwise constraints, fundamental in the context of fairness, and provides a practical, guaranteed constrained learning algorithm~(Sec.~\ref{S:algorithm}).


Due to these challenges, learning under requirements is often tackled using regularization, i.e., by integrating a fixed cost for violating the constraints into the training objective~(see, e.g.,~\cite{Goodfellow14e, Berk17a, Xu18a, Zhao18t, Sinha18c, Ravi19e}). Selecting these costs, however, can be challenging, especially as the number of constraints grows. In fact, their values often depend on the problem instance, the objective value, and can interact in non-trivial ways~\cite{Ehrgott05m, Miettinen98n, Messac03t, Mueller-Gritschneder09a, Schaul19r}. In the case of convex optimization problems, a straightforward relation between constraints and regularization costs can be obtained due to strong duality. A myriad of primal-dual methods can then be used to obtain optimal, feasible solutions~\cite{Bertsekas15c}. However, most modern parametrizations~(e.g., CNNs) lead to non-convex programs for which a regularized formulation need not yield feasible solutions, all the more so good ones~\cite{Bertsekas09c}. While primal-dual algorithms have been used in practice, no guarantees can be given for their outcome in general~\cite{Huang15l, Madry18t, Shaham18u, Zhang19t}.

\section{Constrained Learning}
\label{S:csl}

Let~$\fkD_i$, $i = 0,\dots,m+q$, denote \emph{unknown} probability distributions over the space of data pairs~$(\bx,y)$, with~$\bx \in \calX \subset \setR^d$ and~$y \in \calY \subset \setR$. For a hypothesis class~$\calH$ of functions~$\phi: \calX \to \setR^k$, define the generic constrained statistical learning~(CSL) problem as
\begin{prob}[\textup{P-CSL}]\label{P:csl}
	P^\star = \min_{\phi \in \calH}&
		&&\E_{(\bx,y) \sim \fkD_0} \!\Big[ \ell_0\big( \phi(\bx),y \big) \Big]
	\\
	\subjectto& &&\E_{(\bx,y) \sim \fkD_i} \!\Big[ \ell_i\big( \phi(\bx),y \big) \Big] \leq c_i
		\text{,} &&i = 1,\ldots,m
		\text{,}
	\\
	&&&\ell_j\big( \phi(\bx),y \big) \leq c_j
		\quad \fkD_j\text{-a.e.}
		\text{,} &&j = m+1,\ldots,m+q
		\text{,}
\end{prob}
where~$\ell_i: \setR^k \times \calY \to \setR$ are performance metrics. In general, we think of~$\fkD_0$ as a nominal joint distribution over data pairs~$(\bx,y)$ corresponding to feature vectors~$\bx$ and responses~$y$. The additional~$\fkD_i$ can be used to model different conditional distributions over which requirements are imposed either on average, through the losses~$\ell_i$, $i \leq m$, or pointwise, through the losses~$\ell_j$, $j > m$. Note that the unconstrained version of~\eqref{P:csl}, namely
\begin{prob}\label{P:sl}
	P_\text{U}^\star = \min_{\phi\in\calH}&
		&&\E_{(\bx,y) \sim \fkD_0} \Big[ \ell_0\big( \phi(\bx),y \big) \Big]
		\text{,}
\end{prob}
is at the core of virtually all of modern ML~\cite{Friedman01t, Shalev-Shwartz04u}.

Before tackling \emph{if} and \emph{how} we can learn under constraints, i.e., whether we can solve~\eqref{P:csl}, we illustrate \emph{what} constrained learning can enable. To make the discussion concrete, we present two constrained formulations of the learning problems we solve in Section~\ref{S:sims}.

\textbf{Invariance and fair learning.} Constrained learning is a natural way to formulate learning problems in which invariance is required. Consider a model~$\phi$ whose output is a discrete distribution over~$k$ possible classes. Then, \eqref{P:csl} can be used to write
\begin{prob}\label{P:invariance}
	\minimize_{\phi\in\calH}&
		&&\E_{(\bx,y) \sim \fkD} \!\Big[ \ell_0\big( \phi(\bx),y \big) \Big]
	\\
	\subjectto& &&\E_{(\bx,y) \sim \fkD} \!\Big[
		\dkl\! \Big( \phi(\bx) \,\Vert\, \phi\big(\rho(\bx)\big) \Big) \Big] \leq c
		\text{,}
\end{prob}
where~$\rho$ is an input transformation we wish the model to be invariant to and~$c > 0$ determines the sensitivity level. Formulation~\eqref{P:invariance} can be extended trivially to multiple transformations~(see Sec.~\ref{S:sims}). When the average invariance in~\eqref{P:invariance} is not enough, a stricter, pointwise requirement can be imposed, by using
\begin{equation}\label{E:invariance_pointwise}
	\dkl\! \Big( \phi(\bx) \,\Vert\, \phi\big(\rho(\bx)\big) \Big) \leq c
		\quad \fkD\text{-a.e.}
		\text{.}
\end{equation}
For instance, fairness can be seen as a form of invariance in which~$\rho$ induces an alternative distribution of a certain protected variable~(e.g., a gender change)~\cite{Goh16s, Kusner17c, Agarwal18a, Donini18e, Zafar19f, Cotter19o}. In this case, the constraint in \eqref{P:invariance} is related to the average causal effect~(ACE) and~\eqref{E:invariance_pointwise} to \emph{counterfactual fairness}~\cite{Kusner17c}. While fairness goes beyond invariance, our goal is not to litigate the merit of any fairness metrics, but to show how constrained learning may provide a natural way to encode them.

\textbf{Robust learning.} Another issue affecting ML models, especially CNNs, is robustness. It is straightforward to construct small input perturbations that lead to misclassification and there are now numerous methods to do so. While adversarial training has empirically been shown to improve robustness, it often results in classifiers with poor nominal performance~\cite{Szegedy14i, Goodfellow14e, Huang15l, Madry18t, Sinha18c, Shaham18u}. In~\cite{Zhang19t}, a constrained formulation involving an upper bound on the worst-case error was used to tackle this issue. Similarly, we can address this compromise using~\eqref{P:csl} by writing
\begin{prob}\label{P:robustness}
	\minimize_{\phi\in\calH}&
		&&\E_{(\bx,y) \sim \fkD} \!\Big[ \ell_0\big( \phi(\bx),y \big) \Big]
	\\
	\subjectto& &&\E_{(\bx,y) \sim \fkA} \!\Big[ \ell_0\big( \phi(\bx),y \big) \Big] \leq c
\end{prob}
where~$\fkA$ is an adversarial data distributions. What is more, we can soften the worst-case requirements of robust optimization by taking~$\fkA \mid \varepsilon$ to be a distribution of adversarials with perturbation at most~$\varepsilon$ and pose a prior on~$\varepsilon$~(e.g., an exponential). This results in classifiers whose performance degrades smoothly with the perturbation magnitude. The theory and algorithms developed in this work give generalization guarantees on solutions of this problem obtained using samples of~$\fkA$, which can be accessed based on, e.g., adversarial attacks~(Sec.~\ref{S:sims}). In other words, it establishes conditions under which a classifier that is accurate and robust during training is also accurate and robust during testing.

\section{Probably Approximately Correct Constrained Learning}
\label{S:full_pacc}

While~\eqref{P:csl} clearly addresses many of the issues discussed in Sec.~\ref{S:intro}, we cannot expect to solve it exactly without access to the~$\fkD_i$ against which expectations are evaluated. Additionally, solving the variational~\eqref{P:csl} is challenging unless~$\calH$ is finite. In this section, we address the first matter by settling, as in classical learning theory, on obtaining a \emph{good enough} solution~(Sec.~\ref{S:pacc}). We then show that these solutions are not ``harder'' to get in constrained learning than they were in unconstrained learning~(Sec.~\ref{S:pacc_csl}). We then proceed to tackle the algorithmic challenges by deriving and analyzing a practical constrained learning algorithm~(Sec.~\ref{S:algorithm}).

\subsection{From PAC to PACC}
\label{S:pacc}

Let us begin by defining what it means to learn under constraints. To do so, we start by looking at the unconstrained case, which is addressed in learning theory under the PAC framework~\cite{Valiant84a, Haussler92d, Vapnik00t, Shalev-Shwartz04u}.

\begin{definition}[PAC learnability]\label{D:pac}

A hypothesis class~$\calH$ is~\emph{(agnostic) probably approximately correct~(PAC)} learnable if for every~$\epsilon, \delta \in (0,1)$ and every distribution~$\fkD_0$, a~$\phi^\dagger \in \calH$ can be obtained from~$N \geq N_\calH(\epsilon,\delta)$ samples of~$\fkD_0$ such that~$\E \big[ \ell_0\big( \phi^\dagger(\bx),y \big) \big] \leq P_\textup{U}^\star + \epsilon$ with probability~$1-\delta$.

\end{definition}

A classical result states that~$\calH$ is PAC learnable if and only if it has finite VC dimension and that the~$\phi^\dagger$ from Def.~\ref{D:pac} can be obtained by solving an ERM problem~\cite{Vapnik00t, Shalev-Shwartz04u}. This is, however, not enough to enable constrained learning since a PAC~$\phi^\dagger$ may not be feasible for~\eqref{P:csl}. In fact, feasibility often takes priority over performance in constrained learning problems. For instance, regardless of how good a fair classifier is, it serves no ``fair'' purpose in practice unless it meets fairness requirements~[see, e.g., \eqref{P:invariance}]. These observations lead us to the following definition.

\begin{definition}[PACC learnability]\label{D:pacc}

A hypothesis class~$\calH$ is \emph{probably approximately correct constrained~(PACC)} learnable if for every~$\epsilon, \delta \in (0,1)$ and every distribution~$\fkD_i$, $i = 0,\dots,m+q$, a~$\phi^\dagger \in \calH$ can be obtained based~$N \geq N_\calH(\epsilon,\delta)$ samples from each~$\fkD_i$ such that it is, with probability~$1-\delta$,
\begin{enumerate}[1)]
	\item approximately optimal, i.e.,
	\begin{equation}\label{E:pacc_optimal}
		\E_{(\bx,y) \sim \fkD_0}
			\!\big[ \ell_0\big( \phi^\dagger(\bx),y \big) \big] \leq P^\star + \epsilon
			\quad \text{and}
	\end{equation}

	\item approximately feasible, i.e.,
	\begin{subequations}\label{E:pacc_feasible}
	\begin{alignat}{2}
		\E_{(\bx,y) \sim \fkD_i} \!\big[ \ell_i\big( \phi^\dagger(\bx),y \big) \big]
			&\leq b_i + \epsilon
			\text{,} &&i = 1,\dots,m
			\text{,}
		\\
		\ell_j\big( \phi^\dagger(\bx),y \big)
			&\leq b_j
			\text{,} \ \ \text{for all } (\bx,y) \in \calK_j
			\text{,} \quad &&j = m+1,\dots,m+q
			\text{,}
	\end{alignat}
	\end{subequations}
	where~$\calK_j \subseteq \calX \times \calY$ are sets of~$\fkD_j$ measure at least~$1-\epsilon$.
\end{enumerate}

\end{definition}

Note that every PACC learnable class is also PAC learnable since it satisfies~\eqref{E:pacc_optimal}. However, a PACC learner must also meet the probably approximate feasibility conditions in~\eqref{E:pacc_feasible}. The additional ``C'' in PACC is used to remind ourselves of this fact. Next, we show that the converse is also true, i.e., that PAC and PACC learning are equivalent problems.

\subsection{PACC Learning is as Hard as PAC Learning}
\label{S:pacc_csl}

Having formalized what we mean by constrained learning~(Sec.~\ref{S:pacc}), we turn to the issue of when it can be done. To do so, we follow the unconstrained learning lead and put forward an empirical constrained risk minimization~(ECRM) rule using~$N_i$ samples~$(\bx_{n_i},y_{n_i}) \sim \fkD_i$, namely
\begin{prob}[\textup{P-ECRM}]\label{P:ecrm}
	\hat{P}^\star = \min_{\phi \in \calH}&
		&&\frac{1}{N_0} \sum_{n_0 = 1}^{N_0} \ell_0\big( \phi(\bx_{n_0}), y_{n_0} \big)
	\\
	\subjectto& &&\frac{1}{N_i} \sum_{n_i = 1}^{N_i} \ell_i\big( \phi(\bx_{n_i}), y_{n_i} \big)
		\leq c_i
		\text{,} &&i = 1,\ldots,m
	\\
	&&&\ell_j\big( \phi(\bx_{n_j}), y_{n_j} \big) \leq c_j
	\text{, for all } n_j
		\text{,} &&j = m+1,\ldots,m+q
		\text{.}
\end{prob}
Notice that~\eqref{P:ecrm} is a constrained version of the classical ERM problem that is ubiquitous in the solution of unconstrained learning problems~\cite{Friedman01t, Shalev-Shwartz04u}. The next theorem shows that, under mild assumptions on the losses, if~$\calH$ is PAC learnable, then it is PACC learnable using~\eqref{P:ecrm}.

\begin{theorem}\label{T:pacc_ecrm}
Let the~$\ell_i$, $i = 0,\dots,m+q$, be bounded on~$\calX$. The hypothesis class~$\calH$ is PACC learnable if and only if it is PAC learnable and~\eqref{P:ecrm} is a PACC learner of~$\calH$. Explicitly, let~$d_\calH < \infty$ be the VC dimension of~$\calH$. If~$N_i \geq C \zeta^{-1}(\epsilon,\delta,d_\calH)$, $i = 0,\dots,m+q$, for an absolute constant~$C$ and
\begin{equation}\label{E:zeta}
	\zeta^{-1}(\epsilon,\delta,d) = \frac{d + \log(1/\delta)}{\epsilon^2}
		\text{,}
\end{equation}
then any solution~$\hat{\phi}^\star$ of~\eqref{P:ecrm} is a PACC solution of~\eqref{P:csl}.
\end{theorem}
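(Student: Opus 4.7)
The plan is to prove the two directions separately. The ``only if'' direction is immediate: taking $m=q=0$ in Def.~\ref{D:pacc} recovers Def.~\ref{D:pac}, so any PACC learnable class is automatically PAC learnable. The substance lies in the converse together with showing that~\eqref{P:ecrm} is a PACC learner, which I would establish through simultaneous uniform convergence of empirical means to their population expectations over all of~$\calH$, applied to the $m+q+1$ losses and stitched together with a union bound.

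For the objective and the average constraints, $i=0,\dots,m$, I would invoke the fundamental theorem of statistical learning: because $\calH$ has VC dimension $d_\calH<\infty$ and each $\ell_i$ is bounded, there is an absolute constant $C$ for which, with $N_i\geq C\,\zeta^{-1}(\epsilon/2,\delta/(m+q+1),d_\calH)$, we have $\sup_{\phi\in\calH}\bigl|\E_{\fkD_i}[\ell_i(\phi(\bx),y)] - \tfrac{1}{N_i}\sum_{n_i}\ell_i(\phi(\bx_{n_i}),y_{n_i})\bigr|\leq \epsilon/2$ with probability at least $1-\delta/(m+q+1)$. For each pointwise constraint $j=m+1,\dots,m+q$, I would instead apply VC uniform convergence to the $\{0,1\}$-valued family $\calG_j = \{(\bx,y)\mapsto \indicator[\ell_j(\phi(\bx),y) > c_j] : \phi\in\calH\}$. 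Thresholding a real-valued class at a fixed level cannot blow up its VC dimension beyond an absolute multiple of $d_\calH$, so the same rate $\zeta^{-1}$ governs convergence of violation probabilities. A union bound then makes all $m+q+1$ deviation events hold simultaneously with probability at least $1-\delta$.

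On this favorable event, approximate feasibility of any empirical solution $\hat{\phi}^\star$ reads off directly. For the average constraints, the empirical bound $\tfrac{1}{N_i}\sum_{n_i}\ell_i(\hat{\phi}^\star(\bx_{n_i}),y_{n_i})\leq c_i$ combined with uniform convergence gives $\E_{\fkD_i}[\ell_i(\hat{\phi}^\star(\bx),y)]\leq c_i+\epsilon/2\leq c_i+\epsilon$. For the pointwise constraints, $\hat{\phi}^\star$ achieves empirical violation frequency exactly zero on $N_j$ samples, and VC convergence forces $\Pr_{\fkD_j}[\ell_j(\hat{\phi}^\star(\bx),y)>c_j]\leq\epsilon$; the complement of this violation set serves as the measure-$(1-\epsilon)$ set $\calK_j$ required by Def.~\ref{D:pacc}.

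The main obstacle is approximate optimality. Uniform convergence on $\ell_0$ delivers $\E_{\fkD_0}[\ell_0(\hat{\phi}^\star(\bx),y)] \leq \hat{P}^\star + \epsilon/2$, reducing the task to proving $\hat{P}^\star \leq P^\star + \epsilon/2$. The natural witness is the population minimizer $\phi^\star$, but uniform convergence only certifies that $\phi^\star$ is $\epsilon/2$-empirically-feasible, not strictly empirically feasible, so one cannot just plug it into~\eqref{P:ecrm}. I would close this gap by comparing $\hat{P}^\star$ against an $\epsilon/2$-tightened population program, using that on the good event any feasible point of that tightened program is empirically feasible, and conversely any empirically feasible point is $\epsilon/2$-population-feasible. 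A continuity argument on the optimal value with respect to the constraint thresholds---standard to derive from bounded losses---then absorbs the $\epsilon/2$ slack into the $\epsilon$ budget of Def.~\ref{D:pacc}, at the cost of at most doubling the absolute constant $C$. This reconciliation of empirical and population feasibility is the one place where pure uniform convergence does not hand us the conclusion; the remainder of the proof is bookkeeping over the union bound.
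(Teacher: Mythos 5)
Your overall route is the same as the paper's: uniform convergence of empirical means over $\calH$ for each of the $m+q+1$ losses (with the pointwise constraints handled through the $\{0,1\}$-valued indicator class $\indicator[\ell_j(\phi(\bx),y) \le c_j]$), a union bound to make all deviation events hold simultaneously, and then reading approximate feasibility of $\hat{\phi}^\star$ directly off its exact empirical feasibility. That part of your argument matches the appendix essentially line for line, and your handling of the pointwise constraints via thresholded classes and the measure-$(1-\epsilon)$ sets $\calK_j$ is exactly what the paper does.

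The one place you diverge is the optimality step, and it is worth being precise about what happens there. You correctly identify that the population minimizer $\phi^\star$ need not be empirically feasible, so the textbook ERM chain $\hat{P}^\star \le \hat{\E}[\ell_0(\phi^\star)] \le P^\star + \epsilon/2$ does not go through; the paper's proof simply asserts that approximate optimality follows ``immediately'' from the uniform deviation bound and does not address this. However, your proposed repair does not close the gap: continuity of the optimal value in the constraint thresholds is \emph{not} a consequence of bounded losses. The value function of a constrained program can jump when the right-hand side is tightened --- consider $\calH = \{\phi_1,\phi_2\}$ with one average constraint, where $\phi_1$ has objective $0$ and population constraint value exactly $c_1$ while $\phi_2$ has objective $1$ and ample slack; tightening by any $\epsilon/2 > 0$ excludes $\phi_1$ and the value jumps from $0$ to $1$. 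In that same example $\phi_1$ is empirically infeasible with probability bounded away from zero for all $N$, so $\hat{\phi}^\star = \phi_2$ and the bound $\E[\ell_0(\hat{\phi}^\star)] \le P^\star + \epsilon$ fails. What your tightening argument actually delivers, without further hypotheses, is optimality relative to the \emph{tightened} population optimum $P^\star(c-\epsilon/2)$; relating that back to $P^\star$ requires a constraint qualification (e.g., existence of a strictly feasible near-optimal point, as the paper itself assumes later in Assumption~3 for the dual analysis), not just boundedness. So you should either state the conclusion relative to the tightened program or add such a margin condition; as written, the ``continuity argument \dots standard to derive from bounded losses'' is the step that fails.
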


\begin{proof}
See Appendix~%
\ifappend \ref{X:pacc_ecrm}.\else A in the extended version~\cite{Chamon20p}.\fi
\end{proof}

Theorem~\ref{T:pacc_ecrm} shows that, from a learning theoretic point-of-view, constrained learning is as hard as unconstrained learning. Not only that, but notice the sample complexity of constrained described by~\eqref{E:zeta} matches that of PAC learning~\cite{Vapnik00t, Shalev-Shwartz04u}. It is therefore not surprising that a constrained version of ERM is a PACC learner. A similar result appeared in~\cite{Donini18e} for a particular rate constraint and not in the context of PACC learning. Still, solving~\eqref{P:ecrm} remains challenging. Indeed, while it addresses the statistical issue of~\eqref{P:csl}, it remains, in most practical cases, an infinite dimensional~(functional) problem. This issue is often addressed by leveraging a finite dimensional parametrization of~(a subset of)~$\calH$, such as a kernel model or a (C)NN. Explicitly, we associate to each parameter vector~$\btheta \in \setR^p$ a function~$f_{\btheta} \in \calH$, replacing~\eqref{P:ecrm} by
\begin{prob}\label{P:ecrm_parametrized}
	\hat{P}_\theta^\star = \min_{\btheta \in \setR^p}&
		&&\frac{1}{N_0} \sum_{n_0 = 1}^{N_0} \ell_0\big( f_{\btheta}(\bx_{n_0}), y_{n_0} \big)
	\\
	\subjectto& &&\frac{1}{N_i} \sum_{n_i = 1}^{N_i} \ell_i\big( f_{\btheta}(\bx_{n_i}), y_{n_i} \big)
		\leq c_i
		\text{,} &&i = 1,\ldots,m
	\\
	&&&\ell_j\big( f_{\btheta}(\bx_{n_j}), y_{n_j} \big) \leq c_j
	\text{, for all } n_j
		\text{,} &&j = m+1,\ldots,m+q
		\text{.}
\end{prob}

Even if~\eqref{P:ecrm} is a convex program in~$\phi$, \eqref{P:ecrm_parametrized} typically is not a convex program in~$\btheta$~(except, e.g., if the losses are convex and~$f_{\btheta}$ is linear in~$\btheta$). This issue also arises in unconstrained learning problems, but is exacerbated by the presence of constraints. Though it is sometimes possible to find good approximate minimizers of~$\ell_0$ using, e.g., gradient descent rules~\cite{Zhang16u, Arpit17a, Ge17l, Brutzkus17g, Soltanolkotabi18t}, even obtaining a feasible~$\btheta$ may be challenging. Indeed, although good CNN classifiers can be trained using gradient descent, obtaining a good \emph{fair/robust} classifier is considerably harder. Regularized formulations are often used to sidestep this issue by incorporating a linear combination of the constraints into the objective and solving the resulting unconstrained problem~\cite{Goodfellow14e, Berk17a, Xu18a, Zhao18t, Sinha18c, Ravi19e}. Nevertheless, whereas the generalization guarantees of classical learning theory apply to this modified objective, they say nothing of the requirements it describes. Since strong duality need not hold for the non-convex~\eqref{P:ecrm_parametrized}, this procedure need not be PACC~(Def.~\ref{D:pacc}) and may lead to solutions that are either infeasible or whose performance is unacceptably poor~\cite{Bertsekas09c}.

While no formal connection can be drawn between~\eqref{P:ecrm_parametrized} and its regularized formulation~(due to the lack of strong duality~\cite{Bertsekas09c}), its dual problem turns out to be related to~\eqref{P:csl}. In the sequel, we prove that it provides (near-)PACC solutions for~\eqref{P:csl} with an approximation error in~\eqref{E:pacc_optimal} that depends on the richness of the parametrization and how strict the learning constraints are~(Sec.~\ref{S:ecrm_dual}). In fact, we show that it is a (near-)PACC learner even if the parametrization is PAC learnable but~$\calH$ is not. Based on this result, we obtain a practical constrained learning algorithm~(Sec.~\ref{S:algorithm}) that we use to solve the problems formulated in Sec.~\ref{S:csl}.

\section{A (Near-)PACC Learning Algorithm}
\label{S:near_pacc}

In this section, we derive a practical constrained learning algorithm by first analyzing the dual problem of~\eqref{P:ecrm_parametrized}~(Sec.~\ref{S:ecrm_dual}) and then proposing an algorithm to solve it~(Sec.~\ref{S:algorithm}). Although we know this dual problem is not related to~\eqref{P:ecrm_parametrized}, we prove that it is related directly to the original constrained learning problem~\eqref{P:csl} by showing it is a PACC learner except for an approximation error determined by the quality of the parametrization. We formalize this concept as follows:

\begin{definition}[Near-PACC learnability]\label{D:near_pacc}

A class~$\calH$ is \emph{(near-)PACC} learnable through a class~$\calP$ if there exists an~$\epsilon_0 > 0$ such that for every~$\epsilon, \delta \in (0,1)$ and every distribution~$\fkD_i$, $i = 0,\dots,m+q$, an approximately feasible~$\phi^\dagger \in \calP$~[viz.~\eqref{E:pacc_feasible}] can be obtained with probability~$1-\delta$ based on~$N \geq N_\calP(\epsilon,\delta)$ samples from each~$\fkD_i$ and~$\E_{(\bx,y) \sim \fkD_0}\!\big[ \ell_0\big( \phi^\dagger(\bx),y \big) \big] \leq P^\star + \epsilon_0 + \epsilon$
\end{definition}

In Def.~\ref{D:near_pacc}, $\epsilon_0$ characterizes the \emph{approximation error}. In contrast to unconstrained learning, however, this error cannot be separated from the learning problem due to the constraints. Still, it is \emph{fixed}, i.e., it is independent of the sample set, and affects neither the sample complexity nor the constraint satisfaction. Hence, the parametrized constrained learner sacrifices optimality, but not feasibility, which remains dependent only on the number of samples~$N$~(Def.~\ref{D:pacc}). Finally, observe that the sample complexity does not depend on the original hypothesis class~$\calH$, but on the parametrized~$\calP$. Near-PACC is therefore related to representation-independent learning~\cite{Shalev-Shwartz04u}.

\subsection{The Empirical Dual Problem of~(\ref{P:csl})}
\label{S:ecrm_dual}

We begin by analyzing the gap between~\eqref{P:csl} and its~(parametrized) empirical dual problem. Define the~(parametrized) empirical Lagrangian of~\eqref{P:csl} as
\begin{equation}\label{E:empirical_lagrangian}
\begin{aligned}
	\hat{L}(\btheta, \bmu, \blambda_j) &=
		\frac{1}{N_0} \sum_{n_0 = 1}^{N_0} \ell_0\big( f_{\btheta}(\bx_{n_0}), y_{n_0} \big)
		+ \sum_{i = 1}^m \mu_i \left[ \frac{1}{N_i} \sum_{n_i = 1}^{N_i}
			\ell_i\big( f_{\btheta}(\bx_{n_i}), y_{n_i} \big) - c_i \right]
		\\
		{}&+ \sum_{j = m+1}^{m+q} \left[ \frac{1}{N_j} \sum_{n_j = 1}^{N_j}
			\lambda_{j,n_j} \left( \ell_j\big( f_{\btheta}(\bx_{n_j}), y_{n_j} \big) - c_j \right)
		\right]
		\text{,}
\end{aligned}
\end{equation}
where~$\bmu \in \setR^m_+$ collects the dual variables~$\mu_i$ relative to the average constraints and~$\blambda_j \in \setR^{N_j}_+$ collects the dual variables~$\lambda_{j,n_j}$ relative to the~$j$-th pointwise constraint. The empirical dual problem of~\eqref{P:csl} is then written as
\begin{equation}\label{P:csl_empirical_dual}
	\hat{D}^\star = \max_{\substack{\bmu \in \setR^m_+,\ \blambda_j \in \setR^{N_j}_+}}\ %
		\min_{\btheta \in \setR^p} \hat{L}(\btheta, \bmu, \blambda_j)
		\text{,}
		\tag{$\widehat{\textup{D}}$\textup{-CSL}}
\end{equation}

Note that~\eqref{P:csl_empirical_dual} is the dual problem of the parametrized ECRM~\eqref{P:ecrm_parametrized}. However, due to its non-convexity, its holds only that~$\hat{D}^\star \leq \hat{P}^\star_\theta$ and, in general, a saddle-point of~\eqref{P:csl_empirical_dual} is not related to a solution of~\eqref{P:ecrm_parametrized}~\cite{Bertsekas09c}. Still, \eqref{P:csl_empirical_dual} can be related directly to~\eqref{P:csl}, which is why we refer to it as its empirical dual. This relation obtains under the following assumptions:

\begin{assumption}\label{A:losses}
	The losses~$\ell_i(\cdot,y)$, $i = 0,\dots,m+q$, are~$[0,B]$-valued, $M$-Lipschitz, convex functions for all~$y \in \calY$. The loss~$\ell_0$ is additionally strongly convex.
\end{assumption}

\begin{assumption}\label{A:parametrization}
	The hypothesis class~$\calH$ is convex, the parametrized~$\calP = \{f_{\btheta} \mid \btheta \in \setR^p\} \subseteq \calH$ is PAC learnable, and there is~$\nu > 0$ such that for each~$\phi \in \calH$ there exists~$f_{\btheta} \in \calP$ for which~$\sup_{\bx \in \calX} \abs{f_{\btheta}(\bx)-\phi(\bx)} \leq \nu$.
\end{assumption}

\begin{assumption}\label{A:slater}
	There exists~$\btheta^\prime \in \setR^p$ such that~$f_{\btheta^\prime}$ is strictly feasible for~\eqref{P:csl} with constraints~$c_i - M \nu$ and~$c_j - M \nu$ and for each datasets~$\calS = \big\{ (\bx_{n_i},y_{n_i}) \big\}_{i = 0,\dots,m+q}$ there exists a~$\btheta^{\prime\prime}$ that is strictly feasible for~\eqref{P:ecrm_parametrized}.
\end{assumption}

In contrast to the unconstrained learning setting or the ECRM result in Theorem~\ref{T:pacc_ecrm}, we require that the losses~$\ell_i$ and the hypothesis class~$\calH$ be convex. This, however, does not imply that~\eqref{P:csl_empirical_dual} or~\eqref{P:ecrm_parametrized} are convex problems since~$\ell_i(f_{\btheta}(\bx),y)$ need not be convex in~$\btheta$. Additionally, only the parametrized class~$\calP$ is required to be PAC learnable. Hence, $\calH$ can be the space of continuous functions or a reproducing kernel Hilbert space~(RKHS) and~$f_{\btheta}$ can be a neural network~\cite{Cybenko89a, Hornik89m, Hornik91a} or a finite linear combinations of kernels~\cite{Berlinet11r, Micchelli05l}, both of which meet the uniform approximation assumption. This assumption can also be relaxed in the absence of pointwise constraints~(Remark~\ref{R:no_pointwise}). Assumption~\ref{A:slater} guarantees that the problem is well-posed, i.e., a feasible solution for~\eqref{P:csl} can be found in~$\calP$.

The main result of this section is collected in the following theorem.

\begin{theorem}\label{T:pacc_csl}
Let~$d_\calP$ be the VC dimension of~$\calP$. Under Assumptions~\ref{A:losses}--\ref{A:slater}, \eqref{P:csl_empirical_dual} is a near-PACC learner of~$\calH$ with~$N_\calP = C \zeta^{-1}(\epsilon,\delta,d_\calP)$, for an absolute constant~$C$ and~$\zeta^{-1}$ as in~\eqref{E:zeta}, and
\begin{equation}\label{E:epsilon0}
	\epsilon_0 = \left( 1 + \norm{\bmu_p^\star}_1 + \norm{\blambda_p^\star}_{L_1} \right) M \nu
		\text{,}
\end{equation}
where~$(\bmu_p^\star,\blambda_p^\star)$ are dual variables of~\eqref{P:csl} with constraints~$c_i - M \nu$ for~$i = 1,\dots,m+q$.

\end{theorem}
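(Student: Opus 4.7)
The plan is to decompose the gap between the empirical parametrized dual~\eqref{P:csl_empirical_dual} and the original constrained problem~\eqref{P:csl} into three pieces and to bound each separately: \textbf{(i)} the statistical gap between empirical and population Lagrangians, \textbf{(ii)} the parametrization gap between the population dual over~$\calP$ and the population dual over~$\calH$, and \textbf{(iii)} the duality gap between the population dual and primal over~$\calH$. The proposed learner returns~$\phi^\dagger = f_{\btheta^\dagger}$, where~$(\btheta^\dagger,\bmu^\dagger,\blambda^\dagger)$ is an approximate saddle-point of~\eqref{P:csl_empirical_dual} with~$\btheta^\dagger \in \argmin_\btheta \hat{L}(\btheta,\bmu^\dagger,\blambda^\dagger)$. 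I need to show that~$\phi^\dagger$ is approximately feasible in the sense of~\eqref{E:pacc_feasible} and that its expected loss is at most~$P^\star + \epsilon_0 + \epsilon$.

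For step~\textbf{(iii)}, the population problem over~$\calH$ is a convex program by Assumption~\ref{A:losses} (convex losses) together with Assumption~\ref{A:parametrization} (convex~$\calH$), and Assumption~\ref{A:slater} supplies a Slater point~$f_{\btheta^\prime}$ with margin~$M\nu$ for the tightened version of~\eqref{P:csl} with constraint levels~$c_i - M\nu$ and~$c_j - M\nu$. Standard infinite-dimensional convex duality then yields strong duality for this tightened problem together with existence of dual optima~$(\bmu_p^\star,\blambda_p^\star)$ whose~$\ell_1$/$L_1$ norms are controlled by the Slater margin; these are precisely the dual variables appearing in~\eqref{E:epsilon0}. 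A standard perturbation argument further bounds the tightened primal value~$P_t^\star$ above by~$P^\star + M\nu(\norm{\bmu_p^\star}_1 + \norm{\blambda_p^\star}_{L_1})$.

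For step~\textbf{(ii)}, the uniform approximation clause of Assumption~\ref{A:parametrization} provides, for each~$\phi\in\calH$, some~$f_{\btheta}\in\calP$ with~$\norm{f_{\btheta}-\phi}_\infty\leq\nu$; combined with $M$-Lipschitz continuity of the losses this gives~$\abs{\ell_i(f_{\btheta}(\bx),y)-\ell_i(\phi(\bx),y)}\leq M\nu$ for every~$i$. Weighting by dual variables and summing yields
\[
	\min_{\btheta \in \setR^p} L_\calP(\btheta,\bmu,\blambda) - \min_{\phi \in \calH} L(\phi,\bmu,\blambda)
		\leq \left( 1 + \norm{\bmu}_1 + \norm{\blambda}_{L_1} \right) M\nu
		\text{,}
\]
where~$L_\calP$ and~$L$ denote the parametrized and non-parametrized population Lagrangians; the reverse inequality is immediate from~$\calP \subseteq \calH$. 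Evaluating this bound at the~$(\bmu_p^\star,\blambda_p^\star)$ from step~\textbf{(iii)} and combining with the tightened strong duality recovers exactly the approximation error~$\epsilon_0$ in~\eqref{E:epsilon0}.

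For step~\textbf{(i)}, boundedness of losses (Assumption~\ref{A:losses}) and finite VC dimension~$d_\calP$ (Assumption~\ref{A:parametrization}) give the standard uniform convergence statement: with probability at least~$1-\delta$, $\abs{\hat{L}(\btheta,\bmu,\blambda)-L_\calP(\btheta,\bmu,\blambda)}\leq\epsilon$ uniformly in~$\btheta$ whenever~$N\geq C\zeta^{-1}(\epsilon,\delta,d_\calP)$, and empirical satisfaction of each pointwise constraint at every sample translates into population satisfaction on a~$\fkD_j$-set of measure~$\geq 1-\epsilon$. Chaining the three steps yields~$\hat{D}^\star \leq P^\star + \epsilon_0 + O(\epsilon)$ on the good event; approximate feasibility of~$\phi^\dagger$ together with Slater-bounded dual iterates~$(\bmu^\dagger,\blambda^\dagger)$ then allows dropping the non-negative empirical constraint terms in~$\hat{L}(\btheta^\dagger,\bmu^\dagger,\blambda^\dagger)$ to conclude~$\E[\ell_0(\phi^\dagger(\bx),y)]\leq P^\star+\epsilon_0+\epsilon$ after absorbing constants. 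The main obstacle is step~\textbf{(ii)}: because~$\calP$ is non-convex in~$\btheta$, strong duality cannot be invoked directly over it, and the parametrization gap must be routed through the convex envelope~$\calH$; the need to tighten constraints by~$M\nu$ in order to preserve feasibility of the approximant and keep dual variables bounded is precisely what produces the sharp form of~$\epsilon_0$.
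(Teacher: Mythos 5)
Your three-part decomposition (statistical gap, parametrization gap, duality gap) is exactly the paper's architecture: step~(iii) is Proposition~\ref{T:zdg}, step~(ii) is Proposition~\ref{T:param}, and your route through the tightened problem with margin~$M\nu$ --- whose dual optima are the~$(\bmu_p^\star,\blambda_p^\star)$ appearing in~\eqref{E:epsilon0} --- is the paper's argument essentially verbatim (with the caveat that you cannot literally ``evaluate'' the bound at~$(\bmu_p^\star,\blambda_p^\star)$, since~$D_\nu^\star$ is a maximum over dual variables; you must use that your inequality holds uniformly in~$(\bmu,\blambda)$ and recognize the resulting~$\max$ of the right-hand side as the tightened dual, which equals the tightened primal by Slater).

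The genuine gap is in step~(i). Uniform convergence of~$\hat{L}$ to~$L_\calP$ over~$\btheta$ is not a usable statement here: the Lagrangian's deviation scales with~$\norm{\bmu}_1$ and~$\norm{\blambda}_{L_1}$, which are a priori unbounded, and the empirical and population dual problems are maximized at \emph{different} multipliers --- worse, the empirical pointwise multipliers~$\blambda_j \in \setR^{N_j}_+$ and the population ones~$\lambda_j \in L_{1,+}$ do not even live in the same space, so~$L_\calP(\btheta,\bmu^\dagger,\blambda^\dagger)$ is undefined for the empirical saddle point. The paper's device is complementary slackness at \emph{both} saddle points (established from feasibility, which is itself proved by noting that any constraint violation would drive the dual value to~$+\infty$ while the Slater point caps it at~$B$): this collapses~$D_\nu^\star$ and~$\hat{D}^\star$ to the bare objectives~$F_0(\btheta_\nu^\star)$ and~$\hat{F}_0(\bhtheta)$, after which a single VC bound on~$\ell_0$ (combined with optimality of each minimizer for its own objective) gives~$\abs{D_\nu^\star - \hat{D}^\star} \leq B\zeta(N_0)$. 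The same issue infects your final step: if~$\phi^\dagger$ is merely feasible, the multiplier-weighted slack terms in~$\hat{L}$ are \emph{non-positive}, so ``dropping'' them yields~$\hat{F}_0(\btheta^\dagger) \geq \hat{L}(\btheta^\dagger,\bmu^\dagger,\blambda^\dagger)$ --- the wrong direction for upper-bounding the objective. You need the slack terms to vanish exactly where the multipliers are positive, i.e., complementary slackness again, to turn the chain~$\hat{D}^\star \leq D_\nu^\star + \epsilon \leq P^\star + \epsilon_0 + \epsilon$ into a bound on~$\E[\ell_0(\phi^\dagger(\bx),y)]$.
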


\begin{proof}
See Appendix~%
\ifappend \ref{X:main}.\else B in the extended version~\cite{Chamon20p}.\fi
\end{proof}

Thus, the approximation error incurred by using the parametrization~$f_{\btheta}$ is affected by (i)~the difficulty of the learning problem and (ii)~the richness of the parametrization. Indeed, under Assumptions~\ref{A:losses}--\ref{A:slater}, \eqref{P:csl} is a strongly dual functional problem whose dual variables have a well-known sensitivity interpretation~\cite[Sec.~5.6]{Boyd04c}. So the bracketed quantity in~\eqref{E:epsilon0} quantifies how stringent the learning constraints are in terms of how much performance could be gained by relaxing them. In addition, $\epsilon_0$ is affected by the approximation capability~$\nu$ of the parametrization. Since better parametrizations typically involve more parameters, which in turn affects the VC dimension of~$\calP$, a typical compromise between the approximation error and complexity arises. For small sample sets, the generalization error in Def.~\ref{D:near_pacc} is dominated by the estimation error~$\epsilon$, which improves for lower complexity classes. If there is abundance of data or the learning requirements are particularly stringent, the approximation error~$\epsilon_0$ dominates and more accurate, even if more complex, parametrizations should be used.

Note that the dual variables~$(\bmu_p^\star,\blambda_p^\star)$ may be hard to evaluate since they are related to a version of the statistical problem~\eqref{P:csl}. While their norms can be estimated using classical results from optimization theory~(see, e.g., \cite{Bertsekas09c, Chamon20f}), they often lead to loose, uninformative bounds. Notice, however, that only~$\epsilon$ depends on the sample size.

\begin{remark}\label{R:no_pointwise}
When the constrained learning problem has no pointwise constraints~[$q = 0$ in~\eqref{P:csl}], Assumption~\ref{A:parametrization} can be relaxed from a uniform to a total variation approximation. Explicitly, Theorem~\ref{T:pacc_ecrm} holds if for each~$\phi \in \calH$ there exist~$\btheta \in \setR^p$ such that~$\E_{(\bx,y) \sim \fkD_i} \!\big[ \abs{f_{\btheta}(\bx)-\phi(\bx)} \big] \leq \nu$ for all~$i$.
\end{remark}

\begin{algorithm}[t]
\caption{Primal-dual near-PACC learner}
	\label{L:primal_dual}
\begin{algorithmic}[1]
	\State \textit{Initialize}: $\btheta^{(0)} = 0$, $\bmu^{(0)} = \ones$, $\blambda_j^{(0)} = \ones$
	\For $t = 1,\dots,T$
		\State Obtain~$\btheta^{(t-1)}$ such that
		$\displaystyle
			\hat{L}\Big( \btheta^{(t-1)}, \bmu^{(t-1)}, \blambda_j^{(t-1)} \Big)
				\leq \min_{\btheta \in \setR^p}
					\hat{L} \Big(\btheta, \bmu^{(t-1)}, \blambda_j^{(t-1)} \Big) + \rho
		$

		\State Update dual variables
		\begin{align*}
			\mu_i^{(t)} &= \Bigg[ \mu_i^{(t-1)} + \eta \Bigg(
				\frac{1}{N_i} \sum_{n_i = 1}^{N_i}
					\ell_i\big( f_{\btheta^{(t-1)}}(\bx_{n_i}), y_{n_i} \big) - c_i
			\Bigg) \Bigg]_+
			\\
			\lambda_{j,n_j}^{(t)} &= \Big[ \lambda_{j,n_j}^{(t-1)} + \frac{\eta}{N_j} \Big(
				\ell_j\big( f_{\btheta^{(t-1)}}(\bx_{n_j}), y_{n_j} \big) - c_j
			\Big) \Big]_+
		\end{align*}
	\EndFor
\end{algorithmic}
\end{algorithm}

\subsection{A Primal-Dual near-PACC Learner}
\label{S:algorithm}

We now proceed to introduce a practical algorithm to solve~\eqref{P:csl_empirical_dual} based on a (sub)gradient primal-dual method. To do so, start by noting that the outer maximization is a convex optimization program. Indeed, the dual function~$\hat{d}(\bmu,\blambda_j) = \min_{\btheta} \hat{L}(\btheta,\bmu,\blambda_j)$ is the pointwise minimum of a set of affine functions and is therefore always concave~\cite{Bertsekas09c}. Additionally, its (sub)gradients can be easily computed by evaluating the constraint slacks at the minimizer of~$\hat{L}$~\cite[Ch.~3]{Bertsekas15c}. Hence, the main challenge in~\eqref{P:csl_empirical_dual} is the inner minimization.

Despite the Lagrangian~\eqref{E:empirical_lagrangian} often being non-convex in~$\btheta$, \eqref{P:csl_empirical_dual} is an unconstrained optimization problem. Hence, contrary to~\eqref{P:ecrm_parametrized}, it is often the case that good minimizers can be found, especially for differentiable losses and parametrizations~(i.e., most common ML models). For instance, there is ample empirical and theoretical evidence that gradient descent can learn to good parameters for (C)NNs~\cite{Zhang16u, Arpit17a, Ge17l, Brutzkus17g, Soltanolkotabi18t}. In that vein, we thus assume that we have access to the following oracle:

\begin{assumption}\label{A:oracle}
There exists an oracle~$\btheta^\dagger(\bmu,\blambda_j)$ and~$\rho > 0$ such that~$\hat{L} \big( \btheta^\dagger(\bmu,\blambda_j),\bmu,\blambda_j \big) \leq \min_{\btheta} \hat{L} \big( \btheta,\bmu,\blambda_j \big) + \rho$ for all~$\bmu \in \setR^m_+$ and~$\blambda_j \in \setR^{N_j}_+$, $j = m+1,\dots,m+q$.
\end{assumption}

Assumption~\ref{A:oracle} essentially states that we are able to~(approximately) train regularized unconstrained learners using the parametrization~$f_{\btheta}$. We can alternate between minimizing the Lagrangian~\eqref{E:empirical_lagrangian} with respect to~$\btheta$ for fixed~$(\bmu,\blambda_j)$ and updating the dual variables using the resulting minimizer. This procedure is summarized in Algorithm~\ref{L:primal_dual} and analyzed in the following theorem:

\begin{theorem}\label{T:convergence}
Fix~$\beta > 0$ and consider Algorithm~\ref{L:primal_dual} with at least~$C \zeta^{-1}(\epsilon,\delta,d_\calP)$ samples from each~$\fkD_j$, where~$C$ is an absolute constant, $\zeta^{-1}$ is as in~\eqref{E:zeta}, and~$d_\calP$ is the VC dimension of~$\calP$. Under Assumptions~\ref{A:losses}--\ref{A:oracle}, Algorithm~\ref{L:primal_dual} converges to the neighborhood
\begin{equation}
	P^\star - \rho - \beta - \eta S - \epsilon
		\leq \hat{L}\Big( \btheta^{(T)},\bmu^{(T)},\blambda_j^{(T)} \Big)
		\leq P^\star + \rho + \epsilon_0 + \epsilon
\end{equation}
with probability~$1-\delta$ after at most~$T = O(1/\beta)$ for~$\epsilon_0$ as in~\eqref{E:epsilon0} and~$S = O\big(B^2\big)$.
\end{theorem}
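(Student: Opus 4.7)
The plan is to split the two-sided bound into an upper and a lower bound on $\hat{L}\big(\btheta^{(T)},\bmu^{(T)},\blambda_j^{(T)}\big)$, combining the near-PACC guarantee of Theorem~\ref{T:pacc_csl} (which controls $\hat{D}^\star$ on both sides in probability) with a standard inexact projected supergradient-ascent analysis applied to the outer maximization of the concave dual function $\hat{d}(\bmu,\blambda_j)=\min_{\btheta}\hat{L}(\btheta,\bmu,\blambda_j)$.

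The upper bound will be essentially immediate: since by construction $\hat{d}(\bmu,\blambda_j)\leq \hat{L}(\btheta,\bmu,\blambda_j)$ for all $\btheta$, Assumption~\ref{A:oracle} applied at the last iteration yields $\hat{L}\big(\btheta^{(T)},\bmu^{(T)},\blambda_j^{(T)}\big)\leq \hat{d}\big(\bmu^{(T)},\blambda_j^{(T)}\big)+\rho\leq \hat{D}^\star+\rho$, and Theorem~\ref{T:pacc_csl} gives $\hat{D}^\star\leq P^\star+\epsilon_0+\epsilon$ with probability $1-\delta$ under the stated sample complexity $N_i\geq C\zeta^{-1}(\epsilon,\delta,d_\calP)$. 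For the lower bound I would use the trivial fact $\hat{L}(\btheta,\bmu,\blambda_j)\geq \hat{d}(\bmu,\blambda_j)$ and then argue that the dual iterates approach $\hat{D}^\star$. The dual update in Algorithm~\ref{L:primal_dual} is an inexact projected supergradient-ascent step on the concave $\hat{d}$: the vector of constraint slacks at $\btheta^{(t-1)}$ is a $\rho$-approximate supergradient whose squared norm is bounded by $S=O(B^2)$ thanks to the $[0,B]$ bound in Assumption~\ref{A:losses}, and Slater's condition (Assumption~\ref{A:slater}) bounds the norm of an optimal dual pair $(\bmu^\star,\blambda_j^\star)$ and hence the initialization-to-optimum distance $R$. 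Telescoping the classical squared-distance recursion yields
\[
\max_{t\leq T}\,\hat{d}\big(\bmu^{(t)},\blambda_j^{(t)}\big)\ \geq\ \hat{D}^\star-\frac{R^2}{2\eta T}-\frac{\eta S}{2}-\rho,
\]
so choosing $T=O(1/\beta)$ absorbs the $R^2/(2\eta T)$ term into $\beta$. Pairing this with the probability-$(1-\delta)$ one-sided bound $\hat{D}^\star\geq P^\star-\epsilon$, which also comes out of the proof of Theorem~\ref{T:pacc_csl} via strong duality of the convex statistical program~\eqref{P:csl} (under Assumptions~\ref{A:losses}--\ref{A:slater}) plus uniform convergence over $\calP$, and which crucially does \emph{not} involve $\epsilon_0$ because the parametrization-approximation slack is only needed when passing from the empirical dual to the statistical primal in the opposite direction, closes the left-hand bound.

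The main technical obstacle is that a vanilla supergradient iteration only guarantees the convergence bound at the best-so-far or running-averaged iterate rather than at the raw last iterate: the plan is either to reinterpret $(\btheta^{(T)},\bmu^{(T)},\blambda_j^{(T)})$ as the best iterate, i.e.\ the one attaining $\max_{t\leq T}\hat{d}(\bmu^{(t)},\blambda_j^{(t)})$, or to exploit the concavity of $\hat{d}$ via Jensen's inequality applied to a Polyak-type dual average. A secondary quantitative point is the Slater-based estimate of $R$ and of the optimal multiplier norms: classical bounds give $\|\bmu^\star\|_1+\|\blambda_j^\star\|_1 \leq B/\gamma$ where $\gamma>0$ is the Slater margin in Assumption~\ref{A:slater}, and this constant enters the $O(1/\beta)$ hidden in $T$ without affecting the explicit terms $\rho$, $\eta S$, $\epsilon$, $\epsilon_0$ appearing in the final bound.
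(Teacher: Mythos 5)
Your proposal is correct and follows essentially the same route as the paper: the upper bound from Assumption~\ref{A:oracle} together with $\hat d(\bmu,\blambda_j)\le\hat D^\star$, the lower bound from an inexact supergradient analysis of the concave empirical dual via a squared-distance Lyapunov recursion with the slack-norm bound $S=O(B^2)$, and the transfer $\hat D^\star\to P^\star$ through Propositions~\ref{T:param} and~\ref{T:empirical}, with $\epsilon_0$ appearing only on the upper side exactly as you note. The last-iterate subtlety you flag is resolved in the paper in the way you suggest: $T$ is taken to be the first iteration at which the dual suboptimality $\Delta_t$ exceeds $-\beta$, and the recursion shows this stopping time is at most $U_0/(2\eta\beta)+1 = O(1/\beta)$.
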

\begin{proof}
See Appendix~%
\ifappend \ref{X:algorithm}.\else C in the extended version~\cite{Chamon20p}.\fi
\end{proof}

Theorem~\ref{T:convergence} bounds the suboptimality of Algorithm~\ref{L:primal_dual} with respect to the original learning problem~\eqref{P:csl}. The size of this neighborhood depends polynomially on~$\epsilon_0$, $\epsilon$, the oracle quality~$\rho$, and the step size~$\eta$. The number of iterations needed to reach this neighborhood is inversely proportional to the desired accuracy~$\beta$. It is worth noting that this result applies to the deterministic outputs~$(\btheta^{(T)},\bmu^{(T)},\blambda_j^{(T)})$ of Algorithm~\ref{L:primal_dual} after convergence and not to a randomized solution obtained by sampling from~$(\btheta^{(t)},\bmu^{(t)},\blambda_j^{(t)})$, $t = 0,\dots,T$ as in~\cite{Goh16s, Agarwal18a, Cotter19o}.

Underlying the oracle in Assumption~\ref{A:oracle} is often an iterative procedure, e.g., gradient descent, and the cost of running this procedure until convergence to obtain an approximate minimizer can be prohibitive. A common option then is to alternately update the primal variable~$\btheta^{(t)}$ and the dual variables~$(\bmu^{(t)},\blambda_j^{(t)})$. This primal-dual method leads in fact to a classical convex optimization algorithm~\cite{Arrow58s}. While the convergence guarantee of Theorem~\ref{T:convergence} no longer holds in this case, we observe good results by performing the primal and dual updates at different timescales, e.g., by performing step~3 once per epoch. This is exactly what we do in the next section where we illustrate the usefulness of this constrained learner.

\section{Numerical experiments}
\label{S:sims}

\begin{figure}[tb]
\begin{minipage}[c]{0.48\columnwidth}
\centering
\includegraphics[width=\columnwidth]{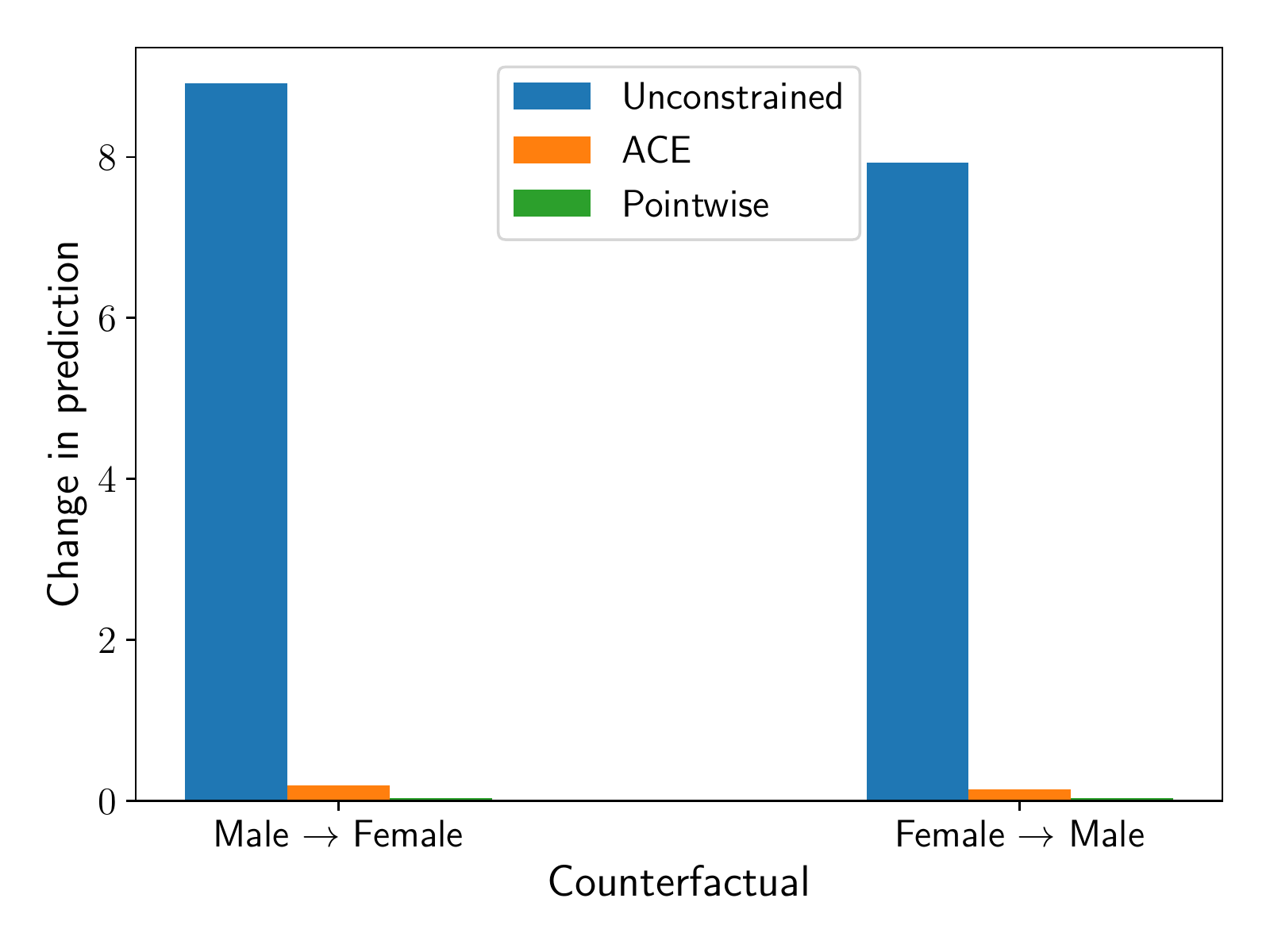}
\vspace*{-6mm}

{\small \hspace{0.4cm} (a)}
\end{minipage}
\hfill
\begin{minipage}[c]{0.48\columnwidth}
\centering
\includegraphics[width=\columnwidth]{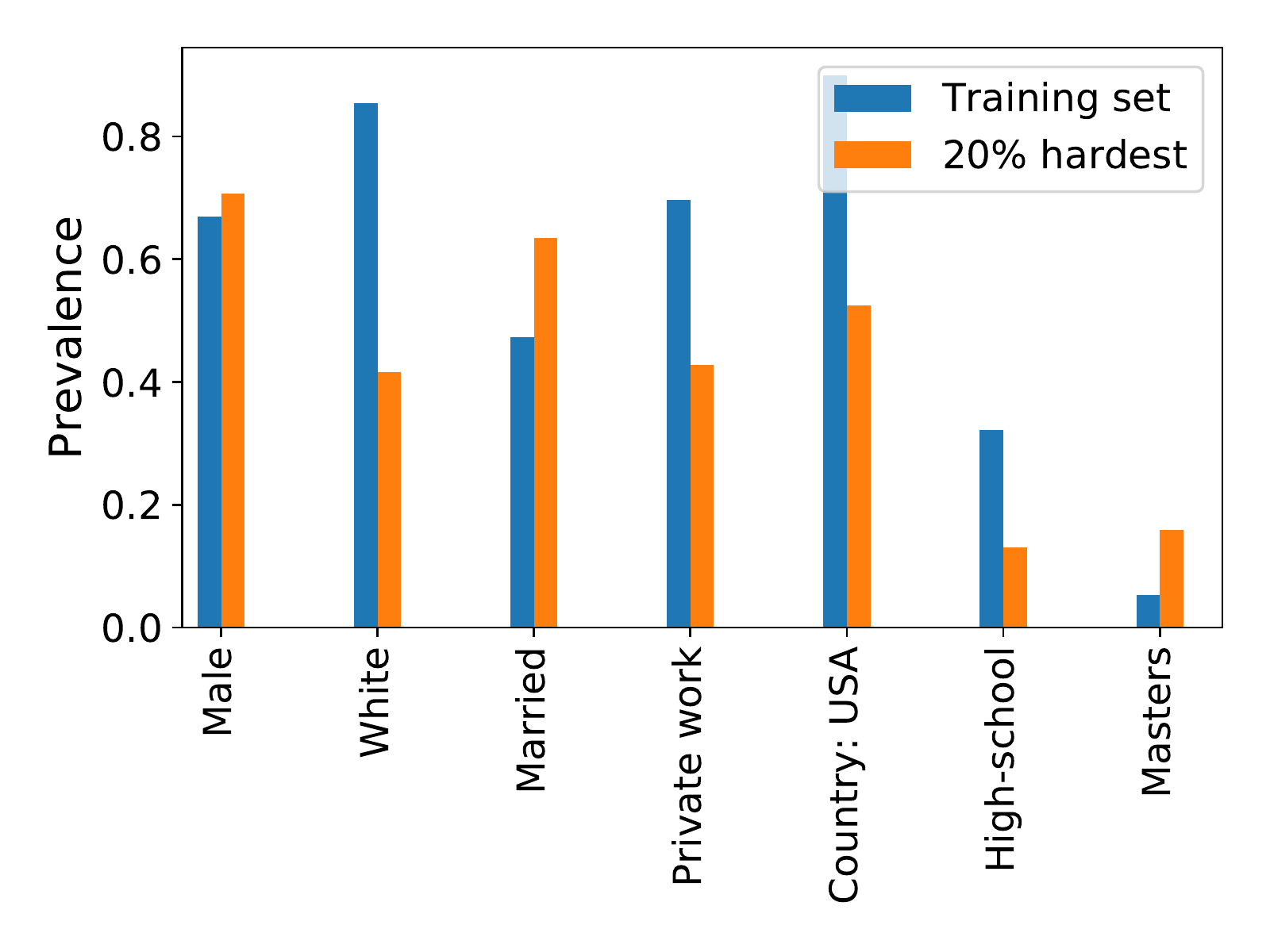}
\vspace*{-6mm}

{\small \hspace{0.7cm} (b)}
\end{minipage}
\caption{Fair classification~(Adult dataset): (a)~classifier sensitivity and (b)~prevalence of different groups among the~$20\%$ training set examples with largest dual variables.}
	\label{F:main_fair}
\end{figure}

\begin{figure}[tb]
\begin{minipage}[c]{0.48\columnwidth}
\centering
\includegraphics[width=\columnwidth]{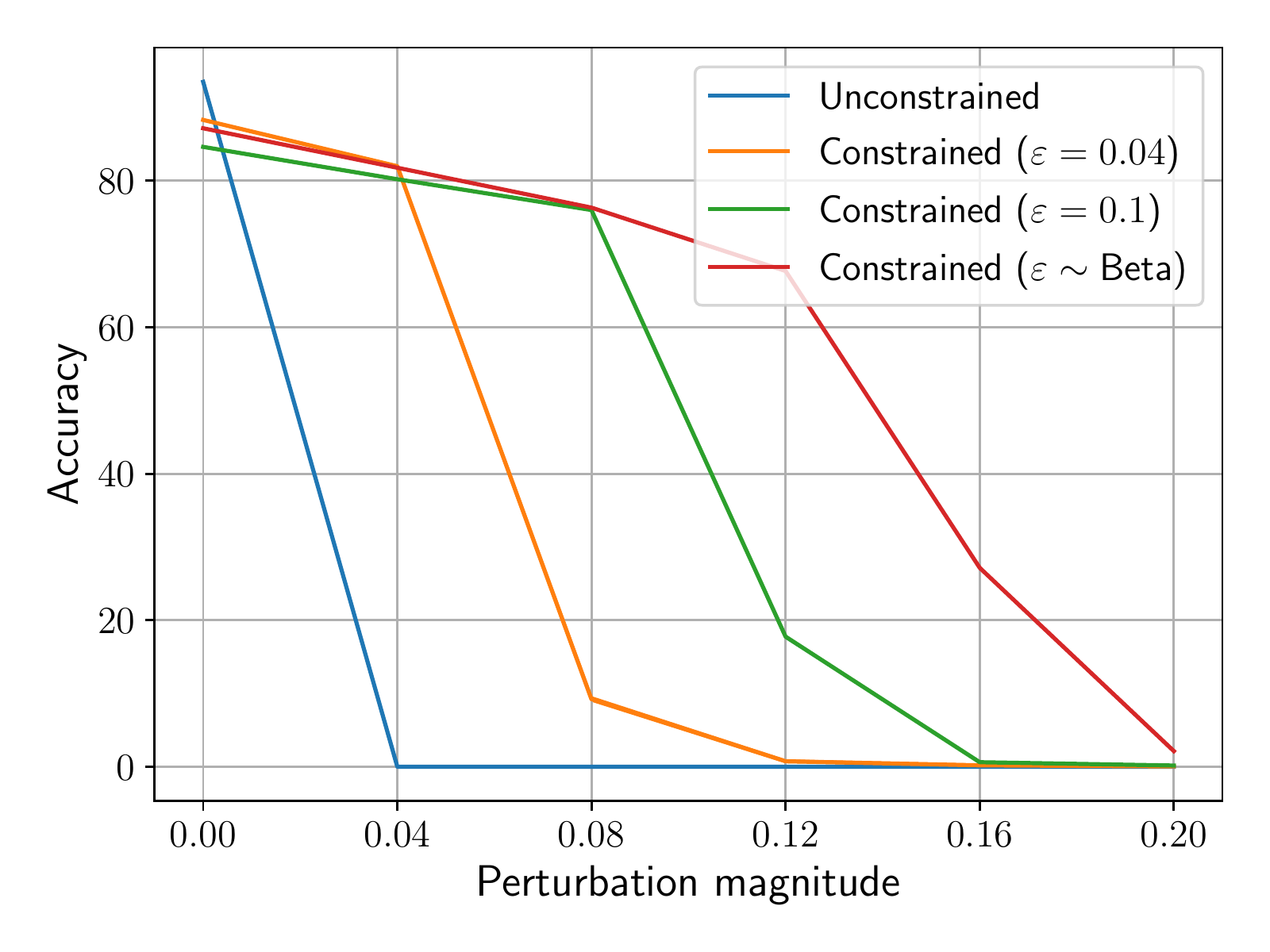}
\vspace*{-6mm}

{\small \hspace{0.7cm} (a)}
\end{minipage}
\hfill
\begin{minipage}[c]{0.48\columnwidth}
\centering
\includegraphics[width=\columnwidth]{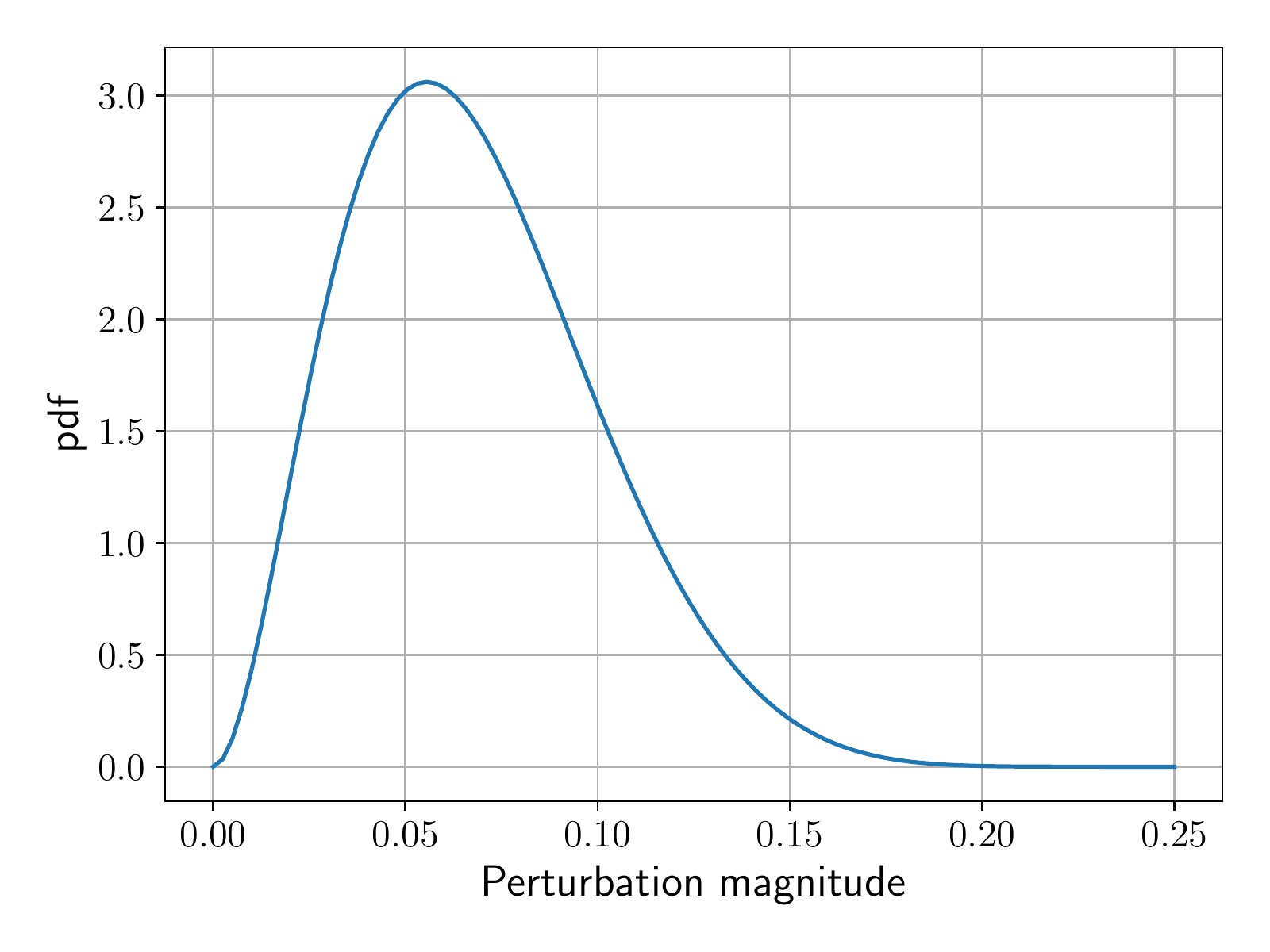}
\vspace*{-6mm}

{\small \hspace{0.7cm} (b)}
\end{minipage}
\caption{Robust constrained learning (FMNIST): (a)~Accuracy of classifiers under the PGD attack for different perturbation magnitudes and (b)~distribution of~$\varepsilon$ used during training.}
	\label{F:main_robust}
\end{figure}

Due to space constraints, we only provide highlights of the results obtained for the problems from Section~\ref{S:csl}. For more details and additional experiments, see Appendix
\ifappend \ref{X:sims}.\else D in the extended version~\cite{Chamon20p}.\fi

\textbf{Invariance and fair learning.} In the Adult dataset~\cite{Dua17u}, our goal is to predict whether an individual makes more than~US\$~50,000.00 while being insensitive to gender. If left unconstrained, a small, one-hidden layer NN would change predictions on around~$8\%$ of the test samples had their genders been reversed~(Fig.~\ref{F:main_fair}a). For step~3 of Algorithm~\ref{L:primal_dual}, we use ADAM~\cite{Kingma17a} with batch size~$128$ and learning rate~$0.1$. All other parameters were kept as in the original paper. After each epoch, we update the dual variables~(step~4), also using ADAM with a step size of~$0.01$. All classifiers were trained over~$300$ epochs.

When constrained using the pointwise~\eqref{E:invariance_pointwise}, the classifier becomes insensitive to the protected variable in over~$99\%$ of the test set. In such simple cases, invariant classifiers can be easily obtained by masking the training samples, although it can bring fairness issues of its own~\cite{Donini18e, Dwork12f}. But Algorithm~\ref{L:primal_dual} provides more than an invariant classifier. Due to the bound on the duality gap between~\eqref{P:csl} and~\eqref{P:csl_empirical_dual}, the dual variables have a sensitivity interpretation: the larger their value, the harder the constraint is to satisfy~\cite{Bertsekas09c}. If we analyze the~$20\%$ of individuals with largest~$\lambda_{n}$~(Fig.~\ref{F:main_fair}b), we find that a significantly higher prevalence of non-white, non-US natives, married individuals. Clearly, while attempting to control for gender invariance, the constrained learner also had to overcome other prejudices correlated to sexism, a well-known challenge in fair classification~\cite{Kearns18p}. Similar results can be derived when controlling for racial bias in the COMPAS dataset.

\textbf{Robust learning.} In this illustration, we use Algorithm~\ref{L:primal_dual} to train a ResNet18~\cite{He16d} to classify images from the FMNIST dataset~\cite{Xiao17f}. As in the previous example, we once again use the ADAM optimizer with the settings from~\cite{Kingma17a}. The best accuracy over the validation set is achieved after~$67$ epochs, yielding a solution with test accuracy of~$93.5\%$~(Figure~\ref{F:main_robust}a). However, it fails to classify any of the test images when perturbed using a PGD attack with perturbation magnitude~($\ell_\infty$-norm of the perturbation) as low as~$\varepsilon = 0.04$~\cite{Madry18t}. The attack uses a step size of~$\varepsilon/30$ for~$50$ iterations and we show the worst result over~$10$ restarts.

To overcome this issue, we use PGD to sample from a hypothetical ``adversarial distribution''~$\fkA$ and constrain the performance of the solution against~$\fkA$ as in~\eqref{P:robustness}. To accelerate training, we use a much weaker attack running PGD without restarts for only~$5$ steps with step size~$\varepsilon/3$. Notice that, as we increase~$\varepsilon$, the model becomes increasingly more robust at the cost of nominal performance. Still, the performance degradation remains abrupt. As we argued before, smoother degradation can be obtained by training against a distribution of magnitudes, e.g., the one in Figure~\ref{F:main_robust}b. Doing so not only yields better performances under perturbation as well as a small loss of nominal accuracy.

\section{Conclusion}

We put forward a theory of learning under requirements by extending the PAC framework to constrained learning. We then prove that unconstrained and constrained learnability are equivalent by showing that a constrained version of the classical ERM rule is a PACC learner. To overcome the challenges in solving the optimization problem underlying this learner, we derive an alternative learner based on a parametrized empirical dual problem. We show that its approximation error is related to the richness of the parametrization as well as the difficulty of meeting the learning constraint and use it to propose a practical algorithm to learn under requirements. We expect that these generalization results can be used to theoretically ground techniques used in practice to address constrained learning problems beyond fairness and robustness. In particular, similar arguments can be used to develop a constrained theory for reinforcement learning~\cite{Paternain19c}. We also believe that these results can be extended to non-convex losses using recent results on the strong duality of certain non-convex variational problems~\cite{Chamon20f}.

\section*{Broader Impact}

As learning becomes an ubiquitous technological solution and begins to affect real societal impact, its shortcomings become more evident. A growing number of reports show that its solutions can be prejudiced and prone to tampering or unsafe behaviors~\cite{Datta15a, Kay15u, Angwin16m, NationalTransportationSafetyBoard18h, Dastin18a, Bright16t}. Constrained learning allows requirements to be imposed during learning, so that the models and solutions obtained are guaranteed to behave in the desired way despite being learned fully from data. This work provides a framework under which to study learning under requirements and shows how and when it can be done. By providing generalization guarantees on the solutions, it enables learning to be used in critical applications in which there is little tolerance for failure. Naturally, solutions learned under constraints are not necessarily safe or fair. How the learning problem is formulated, i.e., which constraints are imposed, play a definite role on these outcomes and policies determining such requirements can be~(and indeed are~\cite{Anand20a, Hadavas20h, Loggins20h}) important sources of biases.

\begin{ack}
This work is supported by ARL DCIST CRA W911NF-17-2-0181.
\end{ack}

\bibliographystyle{IEEEtran}
\bibliography{aux_files/IEEEabrv,aux_files/af,aux_files/bayes,aux_files/control,aux_files/gsp,aux_files/math,aux_files/ml,aux_files/rkhs,aux_files/rl,aux_files/sp,aux_files/stat}

\ifappend
\newpage
\appendix

\section{Proof of Theorem \ref{T:pacc_ecrm}}
\label{X:pacc_ecrm}

Start by noticing from the definition of PACC learnability~[more specifically, from~\eqref{E:pacc_optimal} in Def.~\ref{D:pacc}] that any PACC learnable class~$\calH$ is necessarily PAC learnable.

To prove the converse, recall that if~$\calH$ is PAC learnable, then~$\calH$ has finite VC dimension~\cite[Sec.~3.4]{Vapnik00t}. More precisely, for~$N > C\zeta^{-1}(\epsilon,\delta,d_\calH)$, where~$C$ is an absolute constant and~$\zeta^{-1}$ is as in~\eqref{E:zeta}, and any bounded function~$g$ it holds with probability~$1-\delta$ that
\begin{equation}\label{E:vc_bound}
	\abs{\E_{(\bx,y) \sim \fkD} \!\Big[ g\big( \phi(\bx),y \big) \Big]
		- \frac{1}{N} \sum_{n = 1}^{N} g\big( \phi(\bx_{n}), y_{n} \big)} \leq \epsilon
\end{equation}
for all function~$\phi \in \calH$, distributions~$\calD$, and samples~$(\bx_n,y_n) \sim \calD$. Now, let~$\hat{\phi}^\star$ be a solution of~\eqref{P:ecrm}. From~\eqref{E:vc_bound} and the boundedness hypothesis on~$\ell_0$, we immediately obtain that~$\hat{\phi}^\star$ is probably approximately optimal as in~\eqref{E:pacc_optimal}. Additionally, $\hat{\phi}^\star$ must be feasible for~\eqref{P:ecrm}. Hence,
\begin{subequations}\label{E:ecrm_feas}
\begin{align}
	\frac{1}{N_i} \sum_{n_i = 1}^{N_i} \ell_i\big( \hat{\phi}^\star(\bx_{n_i}), y_{n_i} \big) &\leq c_i
		\text{,}
	&&\text{for } i = 1,\dots,m
		\text{,} \quad \text{and}
	\\
	\ell_j\big( \hat{\phi}^\star(\bx_{n_j}), y_{n_j} \big) &\leq c_j
		\text{,}
	&&\text{for all } n_j = 1,\dots,N_j
		\text{ and } j = m+1,\dots,m+q
	\text{.}
\end{align}
\end{subequations}

To show~\eqref{E:ecrm_feas} implies that~$\hat{\phi}^\star$ is a probably approximately feasible, note that we can write, using~\eqref{E:vc_bound},
\begin{subequations}\label{E:ecrm_feas_vc}
\begin{align}
	\E_{(\bx,y) \sim \fkD_i} \!\Big[ \ell_i\big( \hat{\phi}^\star(\bx),y \big) \Big] &\leq
		\frac{1}{N_i} \sum_{n_i = 1}^{N_i} \ell_i\big( \hat{\phi}^\star(\bx_{n_i}), y_{n_i} \big)
			+ \epsilon
		\quad \text{and}
	\\
	\Pr_{(\bx,y) \sim \fkD_j} \!\Big[ \ell_j\big( \hat{\phi}^\star(\bx),y \big) \leq b_j \Big]
		&= \E_{(\bx,y) \sim \fkD_j} \!\Big[
			\indicator \left[ \ell_j\big( \hat{\phi}^\star(\bx),y \big) \leq b_j \right]
		\Big]
	\notag\\
	{}&\geq \frac{1}{N_j} \sum_{n_j = 1}^{N_j}
		\indicator \left[ \ell_j\big( \hat{\phi}^\star(\bx_{n_j}), y_{n_j} \big) \leq b_j \right]
		- \epsilon
		\text{,}
\end{align}
\end{subequations}
each of which hold with probability~$1-\delta$ over the samples~$(\bx_{n_i},y_{n_i})$ as long as~$N_i > C\zeta^{-1}(\epsilon,\delta,d_\calH)$. Combining~\eqref{E:ecrm_feas} and~\eqref{E:ecrm_feas_vc} we conclude that, with probability~$1-(m+q)\delta$, it holds simultaneously that
\begin{align*}
	\E_{(\bx,y) \sim \fkD_i} \!\Big[ \ell_i\big( \hat{\phi}^\star(\bx),y \big) \Big] &\leq
		c_i + \epsilon
		\quad \text{and}
	\\
	\ell_j\big( \hat{\phi}^\star(\bx),y \big) &\leq c_j
		\quad \text{for all } (\bx,y) \in \calK_j \subseteq \calX \times \calY
		\text{,}
\end{align*}
where each~$\calK_j$ is a set of~$\fkD_j$-measure at least~$1-\epsilon$.

Hence, if~$\calH$ is PAC learnable, then there exists~$N$ such that, if~$\hat{\phi}^\star$ is a solution of~\eqref{P:ecrm} obtained using~$N_i \geq N$ samples from each~$\fkD_i$, then~$\hat{\phi}^\star$ is probably approximately optimal as in~\eqref{E:pacc_optimal} and probably approximately feasible as in~\eqref{E:pacc_feasible}.
\hfill$\square$

\section{Proof of Theorem \ref{T:pacc_csl}}
\label{X:main}

As we have argued before, we cannot rely on the duality between~\eqref{P:ecrm_parametrized} and~\eqref{P:csl_empirical_dual} to obtain this result because of its non-convexity. Hence, this proof proceeds directly from~\eqref{P:csl} by applying three transformations that yield~\eqref{P:csl_empirical_dual}, but whose approximation and estimation errors can be controlled. First, we obtain the dual problem of~\eqref{P:csl} and show that this transformation incurs in no error. This stems from the convexity of~\eqref{P:csl} under Assumptions~\ref{A:losses} and~\ref{A:parametrization} and is a straightforward strong duality result from semi-infinite programming theory~(Proposition~\ref{T:zdg}). Second, we approximate the function class~$\calH$ using the finite dimensional parametrization~$f_{\btheta}$ and bound the \emph{approximation error}~$\epsilon_0$~(Proposition~\ref{T:param}). Third, we obtain~\eqref{P:csl_empirical_dual} by replacing the expectations with their empirical versions. Since the problem is now unconstrained, we can use classical learning theory to evaluate the \emph{estimation error}~$\epsilon$~(Proposition~\ref{T:empirical}). We then combine these results to obtain Theorem~\ref{T:pacc_csl}.

Explicitly, we begin by defining the Lagrangian of~\eqref{P:csl} as
\begin{equation}\label{E:lagrangian_csl}
\begin{aligned}
	L(\phi, \bmu, \blambda) &= \E_{(\bx,y) \sim \fkD_0} \!\Big[ \ell_0\big( \phi(\bx),y \big) \Big]
	+ \sum_{i = 1}^m \mu_i \left[
		\E_{(\bx,y) \sim \fkD_i} \!\Big[ \ell_i\big( \phi(\bx),y \big) \Big] - c_i
	\right]
	\\
	{}&+ \sum_{j = m+1}^{m+n} \int \lambda_j(\bx,y) \Big[
		\ell_j\big( \phi(\bx),y \big) - c_j
	\Big] p_{\fkD_j}\!(\bx,y) d\bx dy
		\text{,}
\end{aligned}
\end{equation}
where~$p_{\fkD_j}$ is the density of~$\fkD_j$, $\bmu \in \setR^m_+$ collects the dual variables~$\mu_i$ relative to the expected constraints, and~$\blambda$ is an~$n \times 1$ vector that collects the functional dual variables~$\lambda_j \in L_{1,+}$ relative to the pointwise constraints. By~$f \in L_{1,+}$ we mean that~$f \in L_1$~(absolutely integrable) and~$f \geq 0$ a.e. For conciseness, we leave the measure implicit. Observe that, since the losses~$\ell_j$ are bounded~(Assumption~\ref{A:losses}), the integral in~\eqref{E:lagrangian_csl} exists and is well-defined. This is a direct consequence of H\"older's inequality~\cite[Thm.~1.5.2]{Durrett10p}. Additionally, while the result does not require~$\fkD_j$ to have a density, we assume that it is absolutely continuous with respect to the Lebesgue measure to simplify the derivations. The dual problem of~\eqref{P:csl_1} can then be written as
\begin{equation}\label{P:dual_csl}
	D^\star = \max_{\bmu \in \setR^m_+,\ \lambda_j \in L_{1,+}}\ \min_{\phi \in \calH}\ %
		L(\phi, \bmu, \blambda)
		\text{.}
		\tag{\textup{D-CSL}}
\end{equation}
Assumptions~\ref{A:losses}--\ref{A:slater} imply that~\eqref{P:csl} is strongly dual:

\begin{proposition}\label{T:zdg}
Under Assumptions~\ref{A:losses}--\ref{A:slater}, the semi-infinite program~\eqref{P:csl} and the saddle-point problem~\eqref{P:dual_csl} are strongly dual, i.e., $P^\star = D^\star$.
\end{proposition}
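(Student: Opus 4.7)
The plan is to recognize \eqref{P:csl} as a convex semi-infinite program on the Banach space containing~$\calH$ and then invoke a generalized Slater-type strong duality theorem. First, I would verify the convex structure: under Assumption~\ref{A:losses}, each functional~$\phi \mapsto \E_{\fkD_i}[\ell_i(\phi(\bx),y)]$ is convex because~$\ell_i(\cdot,y)$ is convex and expectation is linear, and the pointwise constraint~$\ell_j(\phi(\bx),y) \leq c_j$ $\fkD_j$-a.e.\ carves out a convex subset of~$\calH$ since the map~$\phi \mapsto \ell_j(\phi(\cdot),\cdot) - c_j$ is convex into~$L_\infty(\fkD_j)$. Combined with convexity of~$\calH$ (Assumption~\ref{A:parametrization}), this makes~\eqref{P:csl} a convex variational problem. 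Strong convexity of~$\ell_0$ further guarantees uniqueness of any primal optimum, which helps in the analysis but is not strictly needed for zero duality gap.

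Next, I would set up the dual pairing in~\eqref{E:lagrangian_csl} rigorously. The~$m$ expectation constraints naturally dualize to~$\bmu \in \setR_+^m$, while the pointwise constraints, viewed as elements of~$L_\infty(\fkD_j)$, pair with~$\lambda_j \in L_{1,+}(\fkD_j)$ through the integral in~\eqref{E:lagrangian_csl}. Boundedness of the losses (Assumption~\ref{A:losses}) guarantees finiteness of every term in the Lagrangian and hence of the dual functional~$d(\bmu,\blambda) = \inf_{\phi \in \calH} L(\phi,\bmu,\blambda)$, so both sides of the claim~$P^\star = D^\star$ are well defined.

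The crucial step is to invoke a strong duality theorem for convex programs with cone constraints in Banach spaces. Assumption~\ref{A:slater} furnishes a~$\btheta'$ for which~$f_{\btheta'} \in \calP \subseteq \calH$ is strictly feasible for~\eqref{P:csl} with a \emph{uniform} slack of~$M\nu$ on both expectation and pointwise constraints. Because the pointwise slack is uniform rather than merely almost everywhere, $f_{\btheta'}$ lies in the norm-interior of the feasibility cone inside~$L_\infty(\fkD_j)$. This is precisely the generalized Slater constraint qualification required to apply Lagrangian strong duality for convex programs over Banach spaces~(see, e.g.,~\cite{Bertsekas09c} or the variational formulation used in~\cite{Chamon20f}), from which~$P^\star = D^\star$ and attainment of the dual optimum follow.

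The main obstacle is representing the optimal dual variable for each pointwise constraint by an element of~$L_{1,+}(\fkD_j)$ rather than by an element of the full topological dual of~$L_\infty(\fkD_j)$, which also contains singular finitely additive measures. This is exactly where the uniform slack supplied by Assumption~\ref{A:slater} is essential~(a mere~$\fkD_j$-a.e.\ strict feasibility, which would be enough in finite dimensions, does not suffice here). Uniform slack places~$f_{\btheta'}$ in the~$L_\infty$-interior of the positive cone, so the Hahn-Banach functional produced by separation from the non-improving set is continuous on~$L_\infty$ and, being compatible with~$\fkD_j$-null sets, admits an~$L_1$-density with respect to~$\fkD_j$ by a Radon-Nikodym argument. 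Verifying weak duality on the other direction is routine, and combining the two inequalities closes the loop.
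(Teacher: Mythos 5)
Your proposal follows the same skeleton as the paper's proof: verify that Assumptions~\ref{A:losses} and~\ref{A:parametrization} make~\eqref{P:csl} a convex variational program, observe that Assumption~\ref{A:slater} supplies a strictly feasible point~$f_{\btheta'}$ (Slater's constraint qualification), and then cite a strong duality theorem. Where you diverge is in the treatment of the pointwise constraints. The paper multiplies each pointwise constraint through by the density~$p_{\fkD_j}$ to recast~\eqref{P:csl} as an equivalent semi-infinite program~\eqref{P:csl_1} indexed by~$(\bx,y) \in \calX\times\calY$, and then invokes strong duality from semi-infinite programming theory directly, never confronting the dual space of~$L_\infty(\fkD_j)$. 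You instead treat the pointwise constraints as cone constraints in~$L_\infty(\fkD_j)$ and correctly identify the real obstacle --- that Hahn--Banach separation a priori produces a multiplier in the full dual of~$L_\infty$, which contains singular finitely additive measures --- but your resolution of it is incomplete: continuity on~$L_\infty$ together with vanishing on~$\fkD_j$-null sets only places the functional in the space of bounded finitely additive measures absolutely continuous with respect to~$\fkD_j$; Radon--Nikodym requires \emph{countable} additivity, which does not follow from the uniform Slater slack alone. Closing that gap needs an additional ingredient (e.g., regularity of the index set and the constraint functions so that one can work in~$C(\calK)$ and its dual of Radon measures, or a direct appeal to semi-infinite duality as the paper does). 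So your route is sound in outline and more honest than the paper about where the difficulty lies, but the $L_1$-representability step as written is asserted rather than proved; the paper's density-reformulation device is precisely the shortcut that lets it delegate this issue to the cited semi-infinite programming results.
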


\begin{proof}

Start by noticing that~\eqref{P:csl} can be equivalently formulated as
\begin{prob}\label{P:csl_1}
	P^\star = \min_{\phi\in\calH}&
		&&\E_{(\bx,y) \sim \fkD_0} \!\Big[ \ell_0\big( \phi(\bx),y \big) \Big]
	\\
	\subjectto& &&\E_{(\bx,y) \sim \fkD_i} \!\Big[ \ell_i\big( \phi(\bx),y \big) \Big] \leq c_i
		\text{,} \quad i = 1,\ldots,m
		\text{,}
	\\
	&&&\ell_j\big( \phi(\bx),y \big) p_{\fkD_j}\!(\bx,y) \leq c_j p_{\fkD_j}\!(\bx,y)
		\text{,} \quad (\bx,y) \in \calX \times \calY
		\text{,}
	\\
	&&&\qquad\qquad\qquad\qquad\qquad\qquad j = m+1,\ldots,n
		\text{.}
\end{prob}
In fact, both problem have the same objective function and feasibility set. Indeed, if~$\fkD > 0$, the transformation in the pointwise constraints is vacuous. On the other hand, when~$\fkD$ vanishes, the constraint is not enforced in~\eqref{P:csl_1}. However, neither is it in~\eqref{P:csl} since the pointwise constraint need not hold on sets of $\fkD$-measure zero. Note that this is different from satisfying the constraint with probability~$\fkD$.

From Assumptions~\ref{A:losses} and~\ref{A:parametrization} we obtain that~\eqref{P:csl_1} is a semi-infinite convex program. What is more, Assumption~\ref{A:slater} implies it has a strictly feasible solution~$\phi^\prime = f_{\btheta^\prime}$. This constraint qualification, sometimes known as \emph{Slater's condition}, implies that is strongly dual, i.e., that~$P^\star = D^\star$~\cite{Shapiro09s}.
\end{proof}

\subsection{The approximation gap}

While there is no duality gap between~\eqref{P:csl} and~\eqref{P:dual_csl}, the latter remains a variational problem. The next step is there to approximate the functional space~$\calH$ by~$\calP = \{f_{\btheta} \mid \btheta \in \setR^p\}$, the space induced by the finite dimensional parametrization~$f_{\btheta}$. Thus, \eqref{P:dual_csl} becomes the finite dimensional problem
\begin{equation}\label{P:dual_csl_param}
	D_{\nu}^\star = \max_{\bmu \in \setR^m_+,\ \lambda_j \in L_{1,+}}\ %
	\min_{\btheta \in \setR^p}\ %
		L_\nu(\btheta, \bmu, \blambda) \triangleq L(f_{\btheta}, \bmu, \blambda)
		\text{.}
		\tag{$\textup{D}_{\nu}$\textup{-CSL}}
\end{equation}
Since~$\calP \subseteq \calH$~(Assumption~\ref{A:parametrization}), it is clear that~$D_{\nu}^\star \geq D^\star = P^\star$. Yet, if the parametrization is rich enough, we should expect the gap~$D_{\nu}^\star - P^\star$ to be small. This intuition is formalized in the following proposition.

\begin{proposition}\label{T:param}
Let~$\btheta^\star$ achieve the saddle-point in~\eqref{P:dual_csl_param}. Under Assumptions~\ref{A:losses}--\ref{A:slater}, $f_{\btheta^\star}$ is a feasible, near-optimal solution of~\eqref{P:csl}. Explicitly,
\begin{equation}\label{E:param_gap}
	P^\star \leq D^\star_{\nu} \leq P^\star +
		\left( 1 + \norm{\bmut}_1
			+ \sum_{j = m+1}^{m+q} \Vert \tilde{\lambda}_{j}^\star \Vert_{L_1} \right) L \nu
		\text{,}
\end{equation}
for~$P^\star$ and~$D^\star_\nu$ defined as in~\eqref{P:csl} and~\eqref{P:dual_csl_param} respectively and where~$(\bmut,\blambdat)$ are the dual variables of~\eqref{P:csl} with the constraints tightened to~$c_i - M \nu$ for~$i = 0,\dots,m+q$.
\end{proposition}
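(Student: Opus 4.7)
The plan is to sandwich $D_\nu^\star$ between $P^\star$ and the value of a \emph{tightened} version of~\eqref{P:csl}, using Lipschitz continuity of the losses to convert uniform function approximation into uniform loss approximation. The lower bound $P^\star \le D_\nu^\star$ in~\eqref{E:param_gap} is immediate: since $\calP \subseteq \calH$ by Assumption~\ref{A:parametrization}, restricting the inner minimum in the dual function to $\calP$ can only increase it, so $D_\nu^\star \ge D^\star$, and Proposition~\ref{T:zdg} identifies $D^\star = P^\star$.

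For the upper bound, my plan is to chain three inequalities. First, non-convex weak duality applied to the parametrized primal $P_\nu^\star = \min_{\btheta} \E[\ell_0(f_\btheta)]$ subject to the constraints of~\eqref{P:csl} gives $D_\nu^\star \le P_\nu^\star$. Second, I introduce the tightened problem obtained by replacing each threshold $c_i$ in~\eqref{P:csl} by $c_i - M\nu$. Assumption~\ref{A:slater} is precisely Slater's condition for this tightened program, so Proposition~\ref{T:zdg} gives strong duality with value $\tilde P^\star$ attained by some $\tilde\phi^\star \in \calH$. Let $f_\btheta \in \calP$ be a uniform $\nu$-approximant of $\tilde\phi^\star$, granted by Assumption~\ref{A:parametrization}. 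The $M$-Lipschitz hypothesis in Assumption~\ref{A:losses} then propagates this closeness to every loss: each expected and essentially-pointwise slack of $f_\btheta$ exceeds the corresponding slack of $\tilde\phi^\star$ by at most $M\nu$, and because the tightened thresholds already absorb exactly this much, $f_\btheta$ is feasible for~\eqref{P:csl} with objective at most $\tilde P^\star + M\nu$. Hence $P_\nu^\star \le \tilde P^\star + M\nu$. Third, a direct expansion yields $\tilde L(\phi,\bmu,\blambda) = L(\phi,\bmu,\blambda) + M\nu\bigl(\norm{\bmu}_1 + \sum_j \norm{\lambda_j}_{L_1}\bigr)$, and hence the same identity between their dual functions. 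Evaluating at the tightened dual optimum $(\bmut,\blambdat)$, invoking strong duality of the tightened problem, and using the trivial weak-duality bound $d(\bmu,\blambda) \le P^\star$ for every $\bmu,\blambda \ge 0$ (test $d$ against any primal-feasible $\phi^\star$ of~\eqref{P:csl}: its slacks are non-positive, so the penalty terms drop out), one obtains $\tilde P^\star \le P^\star + M\nu\bigl(\norm{\bmut}_1 + \sum_j \norm{\tilde{\lambda}_j^\star}_{L_1}\bigr)$. Chaining the three inequalities yields~\eqref{E:param_gap}.

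Feasibility of $f_{\btheta^\star}$ itself for~\eqref{P:csl} then follows from the saddle-point property without extra work: were any slack at $f_{\btheta^\star}$ strictly positive, $L(f_{\btheta^\star},\bmu,\blambda)$ would be unbounded above over $\bmu,\blambda \ge 0$, contradicting the saddle-point inequality. The step I expect to be most delicate is the second, and specifically its pointwise half: transferring the feasibility of $\tilde\phi^\star$ to $f_\btheta$ requires the \emph{uniform} form of Assumption~\ref{A:parametrization} rather than an average one, since a single bad input would violate a pointwise constraint. It is exactly this gap that forces Remark~\ref{R:no_pointwise} to relax the approximation requirement to $L_1(\fkD_i)$ only when $q = 0$.
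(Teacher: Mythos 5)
Your proof is correct and reaches the bound in \eqref{E:param_gap}, but the upper bound is organized differently from the paper's. The paper never passes through the parametrized primal value $P_\nu^\star$: it adds and subtracts $L(\phi,\bmu,\blambda)$ inside the max--min defining $D_\nu^\star$, uses Lipschitz continuity and Assumption~\ref{A:parametrization} to absorb the difference into a $\big(1+\sum_i\mu_i+\sum_j\E[\lambda_j]\big)M\nu$ term, and then recognizes the resulting min--max as the (strongly dual) tightened problem, which it finally evaluates at $\phi^\star$. You instead chain $D_\nu^\star\le P_\nu^\star\le \tilde P^\star+M\nu\le P^\star+\big(\norm{\bmut}_1+\sum_j\Vert\tilde\lambda_j^\star\Vert_{L_1}\big)M\nu+M\nu$, where the middle step exhibits an explicit parametrized $\nu$-approximant of the tightened solution and checks its feasibility, and the last step is a clean sensitivity bound obtained from the identity $\tilde L=L+M\nu\big(\norm{\bmu}_1+\sum_j\norm{\lambda_j}_{L_1}\big)$ together with weak duality $d(\bmu,\blambda)\le P^\star$. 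Your route makes the role of each assumption more legible (uniform approximation is used exactly once, to transfer pointwise feasibility --- and you correctly identify this as the step that forces Remark~\ref{R:no_pointwise}), and it avoids the paper's slightly delicate recognition of a perturbed Lagrangian as a tightened primal. The one place you pay for this is attainment: you need the tightened problem to admit a minimizer $\tilde\phi^\star\in\calH$, whereas the paper only needs its \emph{dual} optimum $(\bmut,\blambdat)$, which is guaranteed nonempty and bounded under the Slater condition of Assumption~\ref{A:slater}; your argument is easily patched by running steps two and three with an $\epsilon$-minimizer of the tightened problem and letting $\epsilon\to 0$. Your feasibility and lower-bound arguments coincide with the paper's (the paper phrases feasibility as $D_\nu^\star<B$ versus $D_\nu^\star\to+\infty$, which is the same unboundedness contradiction you invoke via the saddle-point inequality). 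Finally, note the $L\nu$ in \eqref{E:param_gap} is a typo for $M\nu$, consistent with your usage and with \eqref{E:epsilon0}.
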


\begin{proof}
See Appendix~\ref{X:param}.
\end{proof}

\subsection{The estimation gap}

All that remains, is to turn the statistical Lagrangian~\eqref{E:lagrangian_csl} into the empirical~\eqref{E:empirical_lagrangian}. The incurred estimation error is described in the next proposition.

\begin{proposition}\label{T:empirical}

Let~$\bhtheta$ achieve the saddle-point in~\eqref{P:csl_empirical_dual} and for~$\delta > 0$, let
\begin{equation}\label{E:zeta_inv}
	\zeta(N) = \sqrt{\frac{1}{N} \left[
		1 + \log\left( \frac{4 (m+q+2) (2N)^{d_\calP}}{\delta} \right)
	\right]}
		\text{,}
\end{equation}
where~$d_\calP$ is the VC dimension of the parametrized class~$\calP$. Under Assumptions~\ref{A:losses}--\ref{A:slater}, it holds with probability~$1-\delta$ over the samples drawn from the distributions~$\fkD_i$ that
\begin{align}
	\big\vert D^\star_{\nu} - \hat{D}^\star \big\vert &\leq B \zeta(N_0)
		\text{,}
		\label{E:empirical_gap}
	\\
	\E_{(\bx,y) \sim \fkD_i} \!\Big[ \ell_i\big( f_{\bm{{\hat{\theta}}}^\star}(\bx), y \big) \Big]
		&\leq c_i + B \zeta(N_i)
		\text{,}
		\ \ \text{and}
		\label{E:empirical_feas}
	\\
	\ell_j\big( f_{\bm{{\hat{\theta}}}^\star}(\bx), y \big) &\leq c_j
		\ \ \text{for } (\bx,y) \in \calK_j
		\text{,}
		\label{E:empirical_pointwise}
\end{align}
where~$\calK_j \subseteq \calX \times \calY$ is a set of $\fkD_j$-measure at least~$1 - \zeta(N_j)$ for all~$j = m+1,\dots,m+q$.

\end{proposition}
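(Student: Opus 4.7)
The plan is to establish a single high-probability event on which every statistical quantity is uniformly close to its empirical counterpart over the parametrization~$\calP$, and then read off the three conclusions from that event together with the saddle-point optimality of~$\bhtheta$.

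The core tool is VC/pseudo-dimension uniform convergence applied separately to each of the function classes~$\calF_i = \{(\bx,y)\mapsto \ell_i(f_{\btheta}(\bx),y) : \btheta \in \setR^p\}$ for~$i=0,\ldots,m+q$. By Assumption~\ref{A:losses} these classes are~$[0,B]$-valued and, as compositions with the parametrization, have pseudo-dimension at most~$d_\calP$. The standard Vapnik deviation inequality gives a uniform gap of at most~$B\zeta(N_i)$ between~$\E_{\fkD_i}[\ell_i\circ f_{\btheta}]$ and its $N_i$-sample empirical average, with probability at least~$1-\delta/(m+q+2)$. For the pointwise constraints I would additionally invoke the VC inequality on the sublevel indicator classes~$\{\indicator[\ell_j(f_{\btheta}(\bx),y)>c_j] : \btheta\}$, whose VC dimension is at most~$d_\calP$ by sublevel-set composition. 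A union bound across these~$m+q+2$ events produces the factor in~\eqref{E:zeta_inv} and caps the total failure probability at~$\delta$; I would condition on the complementary event for the rest of the argument.

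With uniform convergence in hand, the feasibility claims~\eqref{E:empirical_feas} and~\eqref{E:empirical_pointwise} are immediate. Saddle-point optimality of the triple attaining~$\hat{D}^\star$ forces~$\bhtheta$ to be feasible for the empirical problem, since otherwise one could drive the multiplier on a violated constraint to~$+\infty$, contradicting finiteness of the saddle value. The uniform deviation then transfers each empirical slack~$N_i^{-1}\sum \ell_i \leq c_i$ into the statistical bound~$\E_{\fkD_i}[\ell_i\circ f_{\bhtheta}]\leq c_i+B\zeta(N_i)$ of~\eqref{E:empirical_feas}. For the pointwise constraints, empirical feasibility at every sample makes the empirical measure of the violation set exactly zero, so the one-sided indicator deviation gives~$\Pr_{\fkD_j}[\ell_j\circ f_{\bhtheta}>c_j]\leq \zeta(N_j)$, and~$\calK_j = \{(\bx,y) : \ell_j(f_{\bhtheta}(\bx),y)\leq c_j\}$ is the required set.

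The main obstacle is the value bound~\eqref{E:empirical_gap}, because the pointwise multipliers live in different spaces in the two dual problems: a density~$\lambda_j\in L_{1,+}$ against~$\fkD_j$ in~\eqref{P:dual_csl_param}, versus a finite vector~$\blambda_j\in\setR^{N_j}_+$ against the empirical measure in~\eqref{P:csl_empirical_dual}. The plan is to start from the envelope inequality~$|D^\star_{\nu}-\hat{D}^\star|\leq \sup_{\btheta,\bmu,\blambda_j}|L_{\nu}-\hat{L}|$, which after expanding term by term produces uniform deviations weighted by~$(1+\|\bmu\|_1+\sum_j\|\lambda_j\|_{L_1})$. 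Turning this into the clean~$B\zeta(N_0)$ bound requires (i)~restricting the dual sup to a bounded ball using Slater's condition~(Assumption~\ref{A:slater}), exactly as in the proof of Proposition~\ref{T:param}, and (ii)~unifying the two representations of the pointwise multiplier by viewing both as nonnegative measures and applying VC concentration to the product class~$\{\lambda_j(\ell_j\circ f_{\btheta}-c_j)\}$ after restricting~$\lambda_j$ to the Slater-controlled~$L_1$-ball. Step~(ii) is the technically delicate point, and is where the convexity of~$\calH$ and the uniform approximation afforded by~$\calP$ in Assumption~\ref{A:parametrization} jointly enter.
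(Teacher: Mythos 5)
Your treatment of the feasibility claims \eqref{E:empirical_feas} and \eqref{E:empirical_pointwise} coincides with the paper's: the unbounded-multiplier argument forces $f_{\bhtheta}$ to be feasible for the empirical problem, and uniform VC deviations on the composed loss classes and on the sublevel-indicator classes, combined with a union bound over the $m+q+2$ events, transfer empirical feasibility to approximate statistical feasibility. That half is sound.

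The gap is in your plan for the value bound \eqref{E:empirical_gap}, and it is the step you yourself flag as delicate. First, the envelope route $\abs{D^\star_{\nu}-\hat{D}^\star}\leq\sup\abs{L_{\nu}-\hat{L}}$ cannot yield the stated bound even if completed: after restricting to a Slater-controlled dual ball it would produce a deviation weighted by $\big(1+\norm{\bmu}_1+\sum_j\norm{\lambda_j}_{L_1}\big)$ and involving all sample sizes $N_i$, whereas the claim is the clean $B\zeta(N_0)$ with no dual-norm factor and dependence on $N_0$ alone. Second, your step~(ii) fails outright: the product class $\{(\bx,y)\mapsto\lambda_j(\bx,y)\,(\ell_j(f_{\btheta}(\bx),y)-c_j)\}$ with $\lambda_j$ ranging over an $L_1$-ball of nonnegative functions is infinite-dimensional in a way unrelated to $d_\calP$ and is not a uniform Glivenko--Cantelli class, so no VC concentration applies; moreover the empirical multipliers $\blambda_j\in\setR^{N_j}_+$ are indexed by sample points rather than being evaluations of a fixed function, so there is no canonical way to compare the two suprema. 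The missing idea is \emph{complementary slackness}: feasibility of the primal minimizers together with optimality of the dual variables (if a multiplier were positive on a strictly slack constraint, zeroing it would increase the dual value) forces every constraint term to vanish at both saddle points, so that $D^\star_{\nu}=\E_{(\bx,y)\sim\fkD_0}[\ell_0(f_{\btheta_\nu^\star}(\bx),y)]\triangleq F_0(\btheta_\nu^\star)$ and $\hat{D}^\star=\tfrac{1}{N_0}\sum_{n_0}\ell_0(f_{\bhtheta}(\bx_{n_0}),y_{n_0})\triangleq\hat{F}_0(\bhtheta)$. The sandwich $F_0(\btheta_\nu^\star)-\hat{F}_0(\btheta_\nu^\star)\leq F_0(\btheta_\nu^\star)-\hat{F}_0(\bhtheta)\leq F_0(\bhtheta)-\hat{F}_0(\bhtheta)$ then reduces \eqref{E:empirical_gap} to a single uniform deviation over the class $\{\ell_0\circ f_{\btheta}\}$, which is exactly where the bound $B\zeta(N_0)$ comes from. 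Without this reduction the dual-variable mismatch you identify has no resolution in your argument.
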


\begin{proof}
See appendix~\ref{X:empirical}.
\end{proof}

\subsection{The PACC solution}

The proof concludes by combining the parametrization and estimation gap results from Propositions~\ref{T:param} and~\ref{T:empirical}. Namely, notice that~\eqref{E:empirical_feas} and~\eqref{E:empirical_pointwise} imply that the minimizer~$\hat{\btheta}^\star$ that achieves the saddle-point in~\eqref{P:csl_empirical_dual} is probably approximately feasible~[see~\eqref{E:pacc_feasible}] for~\eqref{P:csl}. Then, combining~\eqref{E:param_gap} and~\eqref{E:empirical_gap} using the triangle inequality yields the near-PACC gap from Def.~\ref{D:near_pacc}. Fixing~$N$ such that~$B\zeta(N) \leq \epsilon$ yields the result in Theorem~\ref{T:pacc_csl}.
\hfill$\square$

\subsection{Proof of Proposition~\ref{T:param}: The Approximation Gap}
\label{X:param}

We first prove that~$f_{\btheta^\star}$ is feasible for~\eqref{P:csl} and then bound the gap between~$D^\star_\nu$ and~$P^\star$.

\paragraph{Feasibility.}
Suppose that~$f_{\btheta^\star}$ is infeasible. Then, there exists at least one~$i > 0$ such that~$\E_{(\bx,y) \sim \fkD_i} \!\Big[ \ell_i\big( f_{\btheta^\star}(\bx), y \big) \Big] > c_i$ or~$\ell_i\big( f_{\btheta^\star}(\bx), y \big) > c_i$ over some set~$\calA \subseteq \calX \times \calY$ of positive~$\fkD_i$-measure. Since~$\bmu$ and~$\blambda$ are unbounded above, we obtain that~$D^\star_\nu \to +\infty$. However, Assumptions~\ref{A:losses} and~\ref{A:slater} imply that~$D^\star_\nu < +\infty$. Indeed, consider the dual function
\begin{equation}\label{E:param_d}
\begin{aligned}
	d(\bmu,\blambda) &= \min_{\btheta \in \calH} L_\nu(\btheta, \bmu, \blambda)
	\\
	{}&= \min_{\btheta \in \calH}\ %
	\E_{(\bx,y) \sim \fkD_0} \!\Big[ \ell_0\big( f_{\btheta}(\bx),y \big) \Big]
	+ \sum_{i = 1}^m \mu_i \Big[
		\E_{(\bx,y) \sim \fkD_i} \!\big[ \ell_i\big( f_{\btheta}(\bx),y \big) \big] - c_i
	\Big]
	\\
	{}&+ \sum_{j = m+1}^{m+q} \int \lambda_j(\bx,y) \Big[
		\ell_j\big( f_{\btheta}(\bx),y \big) - c_j
	\Big] p_{\fkD_j}\!(\bx,y) d\bx dy
		\text{,}
\end{aligned}
\end{equation}
for the Lagrangian defined in\eqref{P:dual_csl_param}. Using the fact that~$\ell_0$ is $B$-bounded~(Assumption~\ref{A:losses}) and that there exists a strictly feasible~$\btheta^\prime$~(Assumption~\ref{A:slater}), $d(\bmu,\blambda)$ is upper bounded by
\begin{align*}
	d(\bmu,\blambda) &\leq \E_{(\bx,y) \sim \fkD_0} \!\Big[
		\ell_0\big( f_{\btheta^\prime}(\bx),y \big)
	\Big]
	+ \sum_{i = 1}^m \mu_i \Big[
		\E_{(\bx,y) \sim \fkD_i} \!\big[ \ell_i\big( f_{\btheta^\dagger}(\bx),y \big) \big]
		- c_i
	\Big]
	\\
	{}&+ \sum_{j = m+1}^{m+q} \int \lambda_j(\bx,y) \Big[
		\ell_j\big( f_{\btheta^\prime}(\bx),y \big) - c_j
	\Big] p_{\fkD_j}\!(\bx,y) d\bx dy
	< B
		\text{,}
\end{align*}
where we used the fact that~$\mu_i \geq 0$ and~$\lambda_j \geq 0$ $\fkD_j$-a.e. Hence, it must be that~$f_{\btheta^\star}$ is feasible for~\eqref{P:csl}.

\paragraph{Near-optimality.}
First, recall that under Assumptions~\ref{A:losses}--\ref{A:slater}, \eqref{P:csl}--\eqref{P:dual_csl} form a strongly dual pair of mathematical programs~(Proposition~\ref{T:zdg}). For the Lagrangian in~\eqref{E:lagrangian_csl}, we therefore obtain the saddle-point relation
\begin{equation}\label{E:saddle_csl}
	L( \phi^\star, \bmu^\prime, \blambda^\prime ) \leq
	\max_{\bmu, \blambda}\ \min_{\phi \in \calH}\ %
		L(\phi, \bmu, \blambda)
	= D^\star = P^\star
	= \min_{\phi \in \calH}\ \max_{\bmu, \blambda}\ %
		L(\phi, \bmu, \blambda)
	\leq L( \phi^\prime, \bmu^\star, \blambda^\star )
\end{equation}
holds for all~$\phi^\prime \in \calH$, $\bmu^\prime \in \setR^m_+$, and~$\lambda_j^\prime \in L_{1,+}$, where~$\phi^\star$ is a solution of~\eqref{P:csl} and~$(\bmu^\star,\blambda^\star)$ are solutions of~\eqref{P:dual_csl}. We omit the spaces that~$(\bmu, \blambda)$ belong to for conciseness. Additionally, we have from~\eqref{P:dual_csl_param} that
\begin{equation}
	D_\nu^\star \geq \min_{\btheta \in \setR^p} L( \btheta, \bmu, \blambda )
		\text{,} \quad \text{for all } \bmu \in \setR^m_+ \text{ and } \lambda_j \in L_{1,+}
		\text{.}
\end{equation}
Immediately, we obtain the lower bound in~\eqref{E:param_gap}. Explicitly,
\begin{equation}
	D_\nu^\star
		\geq \min_{\btheta \in \setR^p} L( \btheta, \bmu, \blambda )
		\geq \min_{\phi \in \calH} L( \phi, \bmu^\star, \blambda^\star ) = P^\star
		\text{,}
\end{equation}
where the second inequality comes from the fact that~$\calP \subseteq \calH$~(Assumption~\ref{A:parametrization}).

The upper bound is obtained by relating the parameterized dual problem~\eqref{P:dual_csl_param} to a perturbed~(tightened) version of the original~\eqref{P:csl}. To do so, start by adding and subtracting~$L(\phi,\bmu,\blambda)$ from~\eqref{P:dual_csl_param} to get
\begin{equation}\label{E:lagrangian_difference}
\begin{aligned}
	D^\star_\nu &= \max_{\bmu,\blambda}\ %
		\min_{\btheta \in \setR^p}\ %
		L(\phi,\bmu,\blambda) +
		\E_{(\bx,y) \sim \fkD_0} \!\Big[
			\ell_0\big( f_{\btheta}(\bx),y \big) - \ell_0(\phi(\bx),y)
		\Big]
	\\
	{}&+ \sum_{i = 1}^m \mu_i
		\E_{(\bx,y) \sim \fkD_i} \!\Big[
			\ell_i\big( f_{\btheta}(\bx),y \big) - \ell_i(\phi(\bx),y)
		\Big]
	\\
	{}&+ \sum_{j = m+1}^{m+q} \E_{(\bx,y) \sim \fkD_j} \!\Big[
		\lambda_j(\bx,y) \Big(
		\ell_j\big( f_{\btheta}(\bx),y \big) - \ell_j\big( \phi(\bx),y \big)
	\Big) \Big]
		\text{,}
\end{aligned}
\end{equation}
where we wrote the integral against~$p_{\fkD_j}$ as an expectation for conciseness. Then, using the fact that~$\ell_i$ is $M$-Lipschitz continuous~(Assumption~\ref{A:losses}), we bound the expectations in the first two terms of~\eqref{E:lagrangian_difference} as
\begin{equation}\label{E:bound_lipschitz}
\begin{aligned}
	\E_{(\bx,y) \sim \fkD_j} \!\Big[
		\ell_i\big( f_{\btheta}(\bx),y \big) - \ell_i\big( \phi(\bx),y \big)
	\Big]
	&\leq
	\E_{(\bx,y) \sim \fkD_j} \!\Big[ \big\vert
		\ell_i\big( f_{\btheta}(\bx),y \big) - \ell_i\big( \phi(\bx),y \big)
	\big\vert \Big]
	\\
	{}&\leq M \E_{(\bx,y) \sim \fkD_j} \!\Big[ \big\vert  f_{\btheta}(\bx) - \phi(\bx) \big\vert \Big]
		\text{, for } i = 0,\dots,m
		\text{.}
\end{aligned}
\end{equation}
To bound the last expectation in~\eqref{E:lagrangian_difference}, we first use H\"{o}lder's inequality to get
\begin{multline*}
	\E_{(\bx,y) \sim \fkD_j} \!\Big[
		\lambda_j(\bx,y) \Big(
		\ell_j\big( f_{\btheta}(\bx),y \big) - \ell_j\big( \phi(\bx),y \big)
	\Big) \Big]
	\leq{}
	\\
	\E_{(\bx,y) \sim \fkD_j} \!\big[ \lambda_j(\bx,y) \big]
		\norm{\ell_j\big( f_{\btheta}(\bx),y \big) - \ell_j\big( \phi(\bx),y \big)}_{L_\infty}
		\text{,}
\end{multline*}
where we recall that~$\norm{g}_{L_\infty}$ is the essential supremum of~$\abs{g}$.
Then, the~$M$-Lipschitz continuity of~$\ell_j$~(Assumption~\ref{A:losses}) implies that
\begin{multline}\label{E:bound_lipschiptz_pointwise}
	\E_{(\bx,y) \sim \fkD_j} \!\Big[
		\lambda_j(\bx,y) \Big(
		\ell_j\big( f_{\btheta}(\bx),y \big) - \ell_j\big( \phi(\bx),y \big)
	\Big) \Big]
	\leq{}
	\\
	M \norm{f_{\btheta}(\bx) - \phi(\bx)}_{L_\infty}
		\E_{(\bx,y) \sim \fkD_j} \!\big[ \lambda_j(\bx,y) \big]
		\text{.}
\end{multline}
Using~\eqref{E:bound_lipschitz} and~\eqref{E:bound_lipschiptz_pointwise}, together with the approximation property of the class~$\calH$~(Assumption~\ref{A:parametrization}), we upper bound the minimum over~$\btheta$ in~\eqref{E:lagrangian_difference} to obtain
\begin{equation}\label{E:bound_pertubed1}
	D^\star_\nu \leq \max_{\bmu,\blambda}\ %
		L(\phi,\bmu,\blambda) + \left[
			1 + \sum_{i = 1}^m \mu_i + \sum_{j = m+1}^{m+q}  \E_{(\bx,y) \sim \fkD_j} \!\big[
			\lambda_j(\bx,y) \big]
		\right] M \nu
		\text{.}
\end{equation}

Notice that since~\eqref{E:bound_pertubed1} holds uniformly for all~$\phi \in \calH$, it also holds for the minimizer
\begin{equation}\label{E:bound_pertubed}
	D^\star_\nu \leq \min_{\phi \in \calH}\ \max_{\bmu,\blambda}\ %
		L(\phi,\bmu,\blambda) + \left[
			1 + \sum_{i = 1}^m \mu_i + \sum_{j = m+1}^{m+n}  \E_{(\bx,y) \sim \fkD_j} \!\big[
			\lambda_j(\bx,y) \big]
		\right] M \nu
		\triangleq \tilde{P}^\star
\end{equation}
and that the right-hand side of~\eqref{E:bound_pertubed}, namely~$\tilde{P}^\star$, is in fact a perturbed version of~\eqref{P:csl}. Hence, we obtain another saddle-point relation similar to~\eqref{E:saddle_csl} relating~$\tilde{P}^\star$, and consequently~$D^\star_\nu$, to~$P^\star$.

Formally, \eqref{E:bound_pertubed} can be rearranged as
\begin{equation}\label{E:primal_perturbed}
\begin{aligned}
	\tilde{P}^\star &= \min_{\phi \in \calH}\ \max_{\bmu,\blambda}\ %
		\E_{(\bx,y) \sim \fkD_0} \!\Big[ \ell_0\big( \phi(\bx),y \big) + M \nu \Big]
	\\
	{}&+ \sum_{i = 1}^m \mu_i \left[
		\E_{(\bx,y) \sim \fkD_i} \!\Big[ \ell_i\big( \phi(\bx),y \big) \Big]
			- c_i + M \nu
	\right]
	\\
	{}&+ \sum_{j = m+1}^{m+q} \int \lambda_j(\bx,y) \Big[
		\ell_j\big( \phi(\bx),y \big) - c_j + M \nu
	\Big] p_{\fkD_j}\!(\bx,y) d\bx dy
		\text{,}
\end{aligned}
\end{equation}
where we recognize the optimization problem of
\begin{prob}\label{P:csl_perturbed}
	\tilde{P}^\star = \min_{\phi\in\calH}&
		&&\E_{(\bx,y) \sim \fkD_0} \!\Big[ \ell_0\big( \phi(\bx),y \big) \Big]
			+ M \nu
	\\
	\subjectto& &&\E_{(\bx,y) \sim \fkD_i} \!\Big[ \ell_i\big( \phi(\bx),y \big) \Big]
		\leq c_i - M \nu
		\text{,} &&i = 1,\ldots,m
		\text{,}
	\\
	&&&\ell_j\big( \phi(\bx),y \big) \leq c_j - M \nu
		\quad \fkD_j\text{-a.e.}
		\text{,} &&j = m+1,\ldots,m+q
		\text{.}
\end{prob}
Under Assumptions~\ref{A:losses}--\ref{A:slater}, \eqref{P:csl_perturbed} is also strongly dual~(Proposition~\ref{T:zdg}), so that
\begin{equation}\label{E:saddle_perturbed}
\begin{aligned}
	\tilde{P}^\star = \min_{\phi \in \calH}\ L(\phi,\bmut,\blambdat) + \left[
		1 + \sum_{i = 1}^m \tilde{\mu}_{i}^\star
		+ \sum_{j = m+1}^{m+n}  \E_{(\bx,y) \sim \fkD_j} \!\big[
			\tilde{\lambda}_{j}^\star(\bx,y) \big]
	\right] M \nu
		\text{,}
\end{aligned}
\end{equation}
where~$(\bmut,\blambdat)$ are the dual variables of~\eqref{P:csl_perturbed}, i.e., the~$(\bmu,\blambda)$ that achieve
\begin{equation}\label{E:dual_perturbed}
\begin{aligned}
	\tilde{D}^\star &= \max_{\bmu,\blambda}\ \min_{\phi \in \calH}\ %
		\E_{(\bx,y) \sim \fkD_0} \!\Big[ \ell_0\big( \phi(\bx),y \big) + M \nu \Big]
	\\
	{}&+ \sum_{i = 1}^m \mu_i \left[
		\E_{(\bx,y) \sim \fkD_i} \!\Big[ \ell_i\big( \phi(\bx),y \big) \Big]
			- c_i + M \nu
	\right]
	\\
	{}&+ \sum_{j = m+1}^{m+q} \int \lambda_j(\bx,y) \Big[
		\ell_j\big( \phi(\bx),y \big) - c_j + M \nu
	\Big] p_{\fkD_j}\!(\bx,y) d\bx dy
		\text{.}
\end{aligned}
\end{equation}

Going back to~\eqref{E:bound_pertubed} we can now conclude the proof. First, use~\eqref{E:saddle_perturbed} to obtain
\begin{equation}
	D^\star_\nu \leq \tilde{P}^\star \leq L(\phi^\star,\bmut,\blambdat) + \left[
			1 + \norm{\bmut}_1 + \sum_{j = m+1}^{m+q} \norm{\tilde{\lambda}_{j}^\star}_{L_1}
		\right] L \nu
		\text{,}
\end{equation}
where we used~$\phi^\star$, the solution of~\eqref{P:csl}, as a suboptimal solution in~\eqref{E:saddle_perturbed} and exploited the fact that the dual variables are non-negative to write their sum~(integral) as an~$\ell_1$-norm~($L_1$-norm). The saddle point relation~\eqref{E:saddle_csl} gives~$L(\phi^\star,\bmut,\blambdat) \leq P^\star$, from which we obtain the desired upper bound in~\eqref{E:param_gap}.
\hfill$\square$

\subsection{Proof of Proposition~\ref{T:empirical}: The Estimation Gap}
\label{X:empirical}

\paragraph{Feasibility.}
The proof follows by first showing that~$\bhtheta$ must be feasible for the parametrized ECRM~\eqref{P:ecrm_parametrized} using the same argument as in Sec.~\eqref{X:param}. We then proceed as in the proof of Theorem~\ref{T:pacc_ecrm}.

Formally, suppose there exists at least one~$i > 0$ such that
\begin{equation*}
	\frac{1}{N_i} \sum_{n_i = 1}^{N_i} \ell_i\big( f_{\bhtheta}(\bx_{n_i}), y_{n_i} \big) > c_i
	\quad \text{or} \quad
	\ell_i\big( f_{\bhtheta}(\bx_{n_i}), y_{n_i} \big) > c_i
		\text{ for some } n_i
		\text{.}
\end{equation*}
Then, since~$\bmu$ and~$\blambda_j$ are unbounded above, we obtain that~$\hat{D}^\star \to +\infty$. However, Assumptions~\ref{A:losses} and~\ref{A:slater} imply that~$\hat{D}^\star < +\infty$. Indeed, consider the empirical dual function
\begin{equation}\label{E:dual_function_empirical}
	\hat{d}(\bmu, \blambda_j) = \min_{\btheta \in \setR^p} \hat{L}(\btheta, \bmu, \blambda_j)
		\text{.}
\end{equation}
Using the fact that~$\ell_0$ is $B$-bounded~(Assumption~\ref{A:losses}) and that there exists a strictly feasible~$\btheta^\dagger$~(Assumption~\ref{A:slater}), $\hat{d}(\bmu,\blambda) < B$. Hence, it must be that
\begin{subequations}\label{E:feasibility_empirical}
\begin{align}
	\frac{1}{N_i} \sum_{n_i = 1}^{N_i} \ell_i\big( f_{\bhtheta}(\bx_{n_i}), y_{n_i} \big) &\leq c_i
		\text{,}
	&&\text{for } i = 1,\dots,m
		\text{,} \quad \text{and}
	\\
	\ell_j\big( f_{\bhtheta}(\bx_{n_j}), y_{n_j} \big) &\leq c_j
		\text{,}
	&&\text{for all } n_j
		\text{ and } j = m+1,\dots,m+q
	\text{.}
\end{align}
\end{subequations}

We now proceed to use the classic VC bound~\cite[Sec. 3.4]{Vapnik00t} to show that~$f_{\bhtheta}$ is a probably approximately feasible solution of~\eqref{P:csl}. To do so, recall from~\eqref{E:vc_bound} that since the~$\ell_i$ are bounded~(Assumption~\ref{A:losses}) and~$\calP$ has finite VC dimension~$d_\calP$, we obtain that
\begin{subequations}\label{E:feasbility_vc}
\begin{align}
	\E_{(\bx,y) \sim \fkD_i} \!\Big[ \ell_i\big( f_{\btheta}(\bx),y \big) \Big] &\leq
		\frac{1}{N_i} \sum_{n_i = 1}^{N_i} \ell_i\big( f_{\btheta}(\bx_{n_i}), y_{n_i} \big)
			+ B \zeta(N_i)
		\quad \text{and}
	\\
	\Pr_{(\bx,y) \sim \fkD_j} \!\Big[ \ell_j\big( f_{\btheta}(\bx),y \big) \leq b_j \Big]
		&= \E_{(\bx,y) \sim \fkD_j} \!\Big[
			\indicator \left[ \ell_j\big( f_{\btheta}(\bx),y \big) \leq b_j \right]
		\Big]
	\notag\\
	{}&\geq \frac{1}{N_j} \sum_{n_j = 1}^{N_j}
		\indicator \left[ \ell_j\big( f_{\btheta}(\bx_{n_j}), y_{n_j} \big) \leq b_j \right]
		- \zeta(N_j)
\end{align}
\end{subequations}
hold with probability~$1-\delta$ over the datasets~$\{(\bx_{n_i}), y_{n_i})\}_i$ for~$\zeta$ as in~\eqref{E:zeta_inv}. Combining~\eqref{E:feasibility_empirical} and~\eqref{E:feasbility_vc} and using the union bound, we conclude that, with probability~$1-(m+q)\delta$,
\begin{align*}
	\E_{(\bx,y) \sim \fkD_i} \!\Big[ \ell_i\big( f_{\bhtheta}(\bx),y \big) \Big] &\leq
		b_i + B \zeta(N_i)
		\quad \text{and}
	\\
	\ell_j\big( f_{\bhtheta}(\bx),y \big) &\leq b_j
		\quad \text{for all } (\bx,y) \in \calK_j \subseteq \calX \times \calY
		\text{,}
\end{align*}
where~$\calK_j$ is a set of~$\fkD_j$-measure at least~$1-\zeta(N_j)$.

\paragraph{Near-optimality.}
Let~$(\btheta_\nu^\star, \bmu_\nu^\star, \blambda_\nu^\star)$ and~$(\bhtheta, \bhmu, \bhlambda)$ be variables that achieve~$D_{\nu}^\star$ in~\eqref{P:dual_csl_param} and~$\hat{D}^\star$ in~\eqref{P:csl_empirical_dual} respectively. Then, it holds that
\begin{subequations}\label{E:comp_slack}
\begin{align}
	\mu_{\nu,j}^\star \Big(\E \left[\ell_i(f(\btheta_{\nu}^\star,\bx),y)\right]
		- c_j \Big) &= 0
		\text{,}
		\label{E:comp_slack_param1}
	\\
	\lambda_{\nu,j}^\star(\bx,y) \Big( \ell_j(f(\btheta_{\nu}^\star,\bx),y)
		- c_j \Big) &= 0
		\text{,} \quad \fkD_j\text{-a.e.}
		\text{,}
		\label{E:comp_slack_param2}
	\\
	\hat{\mu}_i \bigg(\frac{1}{N} \sum_{n = 1}^N
		\ell_i(f(\bhtheta,\bx_n),y_n)- c_i \bigg) &= 0
		\text{,} \quad \text{and}
		\label{E:comp_slack_emp1}
	\\
	\hat{\lambda}_{j,n_j} \bigg( \ell_i(f(\bhtheta,\bx_n),y_n)- c_j \bigg) &= 0
		\text{,}
		\label{E:comp_slack_emp2}
\end{align}
\end{subequations}
known as \emph{complementary slackness} conditions. While these are part of the classical KKT conditions~\cite[Sec. 5.5.3]{Boyd04c}, it should be noted that the non-convex nature of both~\eqref{P:dual_csl_param} and~\eqref{P:csl_empirical_dual} implies that these are only necessary and not sufficient for optimality. Nevertheless, feasibility is enough to establish~\eqref{E:comp_slack}.

Indeed, recall from Proposition~\ref{T:param} and~\eqref{E:feasibility_empirical} that the constraint slacks in parentheses in~\eqref{E:comp_slack} are non-positive. Hence, the left-hand sides in~\eqref{E:comp_slack} are also non-positive and if~\eqref{E:comp_slack_param1} does not hold for some~$i$ or if~\eqref{E:comp_slack_param2} does not hold for some~$j$ and a set~$\calZ_j$ of positive~$\fkD_j$ measure, then letting~$\mu_{\nu,i}^\star = 0$ or making~$\lambda_j(\bx,y)$ vanish over~$\calZ_j$ would increase the value of~$D_{\nu}^\star$, contradicting its optimality. Note that since~$\calZ_j$ is measurable, the modified~$\lambda_j$ would still be measurable. A similar argument applies to~\eqref{E:comp_slack_emp1} and~\eqref{E:comp_slack_emp2}.

Immediately, \eqref{E:comp_slack} implies that both~\eqref{P:dual_csl_param} and~\eqref{P:csl_empirical_dual} reduce to
\begin{subequations}\label{E:F0}
\begin{alignat}{2}
	D_{\nu}^\star &= \E \left[\ell_0\left(f( \btheta_{\nu}^\star,\bx), y \right)\right]
		&&\triangleq F_0(\btheta_{\nu}^\star)
		\quad \text{and}
	\\
	\hat{D}^\star &= \frac{1}{N_0} \sum_{n_0 = 1}^{N_0}
			\ell_0\left(f( \bhtheta, \bx_{n_0}), y_{n_0} \right)
		&&\triangleq \hat{F}_0(\bhtheta)
		\text{.}
\end{alignat}
\end{subequations}

To proceed, use the optimality of~$\btheta_{\nu}^\star$ and~$\btheta$ for~$F_0$ and~$\hat{F}_0$ respectively to write
\begin{equation*}
	F_0(\btheta_{\nu}^\star) - \hat{F}_0(\btheta_{\nu}^\star) \leq
		F_0(\btheta_{\nu}^\star) - \hat{F}_0(\bhtheta) \leq
		F_0(\bhtheta) - \hat{F}_0(\bhtheta)
		\text{.}
\end{equation*}
Then,~\eqref{E:F0} yields the bound
\begin{equation}\label{E:gapBound}
	\abs{D_{\nu}^\star - \hat{D}^\star} = \abs{
		F_0(\btheta_{\nu}^\star) - \hat{F}_0(\bhtheta)
	} \leq
	\max \bigg\{
		\abs{F_0(\btheta_{\nu}^\star) - \hat{F}_0(\btheta_{\nu}^\star)},
		\abs{F_0(\bhtheta) - \hat{F}_0(\bhtheta)}
	\bigg\}
\end{equation}
and applying the VC generalization bound from~\cite[Sec. 3.4]{Vapnik00t} to~\eqref{E:gapBound}, yields that, uniformly over~$\btheta$,
\begin{equation}\label{E:vcBound}
	\abs{F_0(\btheta) - \hat{F}_0(\btheta)} \leq B \zeta(N_0)
		\text{,}
\end{equation}
with probability~$1-\delta$ and for~$\zeta$ as in~\eqref{E:zeta}. Combining~\eqref{E:gapBound} and~\eqref{E:vcBound} concludes the proof.
\hfill$\square$

\section{Proof of Theorem~\ref{T:convergence}}
\label{X:algorithm}

In this appendix, we prove the following quantitative version of Theorem~\ref{T:convergence}:

\begin{theorem}\label{T:convergence_quant}
Fix~$\beta > 0$ and consider Algorithm~\ref{L:primal_dual} with at least~$C \zeta^{-1}(\epsilon,\delta,d_\calP)$ samples from each~$\fkD_j$, where~$C$ is an absolute constant, $\zeta^{-1}$ is as in~\eqref{E:zeta}, and~$d_\calP$ is the VC dimension of~$\calP$. Under Assumptions~\ref{A:losses}--\ref{A:oracle}, Algorithm~\ref{L:primal_dual} converges to a probably approximately feasible solution and
\begin{equation}\label{E:convergence_quant}
	P^\star - \rho - \frac{\eta}{2} S - \beta - \epsilon
		\leq \hat{L}\Big( \btheta^{(T)},\bmu^{(T)},\blambda^{(T)} \Big)
		\leq P^\star + \rho + \epsilon_0 + \epsilon
\end{equation}
with probability~$1-\delta$ after~$T$ steps for~$\epsilon_0$ as in~\eqref{E:epsilon0},
\begin{equation}
	S = \sum_{i = 1}^m (B - c_i)^2 + \sum_{j = m+1}^{m+q} \frac{1}{N_j} (B - c_j)^2
		\text{,}
\end{equation}
and
\begin{equation*}
	T \leq \frac{U_0}{2 \eta \beta} + 1
		\text{,}
\end{equation*}
where~$U_0$ is the distance to a pair of optimal dual variables at the beginning of the algorithm, namely,
\begin{equation}
	U_0 = \norm{\bmu^\star}^2 + \sum_{j = m+1}^{m+q} \norm{\blambda_{j}^\star}^2
\end{equation}
for~$(\bmu^\star,\blambda_{j}^\star)$ solutions of~\eqref{P:csl_empirical_dual}.

\end{theorem}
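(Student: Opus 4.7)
}

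The plan is to view Algorithm~\ref{L:primal_dual} as an inexact projected supergradient ascent on the concave empirical dual function $\hat{d}(\bmu,\blambda_j) = \min_\btheta \hat{L}(\btheta,\bmu,\blambda_j)$, analyze its convergence to $\hat{D}^\star$, and then relate $\hat{D}^\star$ to $P^\star$ by invoking the near-PACC guarantee of Theorem~\ref{T:pacc_csl}. First I would verify the two ingredients that feed the standard analysis. The \emph{subgradient identification}: by the envelope theorem, the constraint slacks evaluated at any minimizer $\btheta^\dagger$ form a supergradient of $\hat{d}$, so the dual update in Algorithm~\ref{L:primal_dual} is precisely (projected) supergradient ascent with step $\eta$. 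The \emph{norm bound}: since each loss is $[0,B]$-valued (Assumption~\ref{A:losses}), the squared norm of this supergradient is at most $\sum_{i=1}^m (B-c_i)^2 + \sum_{j=m+1}^{m+q} \frac{1}{N_j}(B-c_j)^2 = S$, where the $1/N_j$ reflects that $\partial\hat{L}/\partial\lambda_{j,n_j} = (\ell_j-c_j)/N_j$. The \emph{inexactness}: Assumption~\ref{A:oracle} gives $\hat{L}(\btheta^{(t-1)},\bmu,\blambda_j) \le \hat{d}(\bmu,\blambda_j)+\rho$, which translates into an $\rho$-approximate supergradient inequality for the outer concave problem.

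Next I would carry out the classical Polyak recursion. Expanding $\|(\bmu^{(t+1)},\blambda_j^{(t+1)})-(\bmu^\star,\blambda_j^\star)\|^2$, using non-expansiveness of the projection onto $\setR_+^{\cdot}$, the approximate supergradient inequality, and the norm bound $S$, I get
\begin{equation*}
2\eta\bigl[\hat{D}^\star-\hat{d}(\bmu^{(t)},\blambda_j^{(t)})\bigr] \le 2\eta\rho+\eta^2 S+\Delta_t-\Delta_{t+1},
\end{equation*}
where $\Delta_t$ is the squared distance to any dual optimum. Telescoping over $t=0,\dots,T-1$ and dividing by $T$ yields
\begin{equation*}
\hat{D}^\star - \max_{t<T}\hat{d}\bigl(\bmu^{(t)},\blambda_j^{(t)}\bigr) \le \rho+\tfrac{\eta}{2}S+\tfrac{U_0}{2\eta T},
\end{equation*}
so choosing $T\ge U_0/(2\eta\beta)+1$ makes the last term at most $\beta$. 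By the oracle bound, $\hat{L}(\btheta^{(T)},\bmu^{(T)},\blambda_j^{(T)}) \ge \hat{d}(\bmu^{(T)},\blambda_j^{(T)})$ up to the supergradient/minimizer bookkeeping, giving the sought lower bound once this is combined with the step below.

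To pin the value to $P^\star$, I would invoke Proposition~\ref{T:empirical} from the proof of Theorem~\ref{T:pacc_csl} to get $|\hat{D}^\star - D^\star_\nu| \le B\zeta(N_0)\le\epsilon$ (w.p.\ $1-\delta$ once $N_0\ge C\zeta^{-1}(\epsilon,\delta,d_\calP)$) and Proposition~\ref{T:param} to get $P^\star \le D^\star_\nu \le P^\star+\epsilon_0$. Chaining these two inequalities produces $P^\star-\epsilon \le \hat{D}^\star \le P^\star+\epsilon_0+\epsilon$, which combined with the subgradient bound above delivers the lower half of~\eqref{E:convergence_quant}. The upper half is easier: for every $t$, Assumption~\ref{A:oracle} and weak duality give $\hat{L}(\btheta^{(t)},\bmu^{(t)},\blambda_j^{(t)}) \le \hat{d}(\bmu^{(t)},\blambda_j^{(t)})+\rho \le \hat{D}^\star+\rho \le P^\star+\epsilon_0+\epsilon+\rho$. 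Probable approximate feasibility of the last iterate follows from the same uniform VC argument used in Proposition~\ref{T:empirical}, since Algorithm~\ref{L:primal_dual}'s constraint slacks control primal feasibility at convergence of the dual updates.

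The main obstacle I anticipate is handling the inexact-oracle error $\rho$ \emph{cleanly} throughout: the gradient is evaluated at $\btheta^\dagger$, which is only an $\rho$-minimizer of $\hat{L}$, so the identity ``supergradient of $\hat{d}$ at $(\bmu,\blambda_j)$'' becomes an approximate statement that must be tracked separately in each appearance of $\rho$ in the Polyak recursion and in passing from $\hat{d}$ back to $\hat{L}$. A secondary subtlety is that the standard subgradient analysis naturally bounds the \emph{best} (or \emph{averaged}) iterate rather than the final one, which squares with the theorem's phrasing ``converges to the neighborhood'' after $T=O(1/\beta)$ steps; making the bound about $\hat{L}(\btheta^{(T)},\bmu^{(T)},\blambda_j^{(T)})$ requires the observation that the upper bound holds at every iterate, so the existential best-iterate lower bound suffices to witness the two-sided neighborhood claim.
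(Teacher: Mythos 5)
Your proposal is correct and follows essentially the same route as the paper: the approximate-Danskin supergradient lemma for the inexact oracle, the Polyak recursion on the squared distance to a dual optimum with non-expansive projection and the slack-norm bound $S$, the transfer from $\hat{D}^\star$ to $P^\star$ via Propositions~\ref{T:param} and~\ref{T:empirical}, and the trivial per-iterate upper bound from Assumption~\ref{A:oracle} and weak duality. The only cosmetic difference is that the paper resolves the best-iterate issue by defining $T$ as the first step at which the dual gap $\Delta_t$ exceeds $-\beta$ (a stopping-time argument), whereas you telescope over a fixed horizon; both yield $T \leq U_0/(2\eta\beta)+1$ and the same neighborhood.
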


\paragraph{Near-optimality.}
We proceed by proving that
\begin{equation}\label{E:convergence_quant1}
	\hat{D}^\star - \rho - \frac{\eta}{2} S - \beta
		\leq \hat{L}\Big( \btheta^{(T)},\bmu^{(T)},\blambda^{(T)} \Big)
		\leq \hat{D}^\star + \rho
		\text{,}
\end{equation}
from which we obtain~\eqref{E:convergence_quant} by recalling that~$\hat{D}^\star$ is near-PACC~(Theorem~\ref{T:pacc_csl}). More precisely, by using Propositions~\ref{T:param} and~\ref{T:empirical}.

Start by defining the empirical dual function
\begin{equation}\label{E:empirical_dual_function2}
	\hat{d}(\bmu,\blambda_j) \triangleq \min_{\btheta} \hat{L}(\btheta,\bmu,\blambda_j)
		\text{.}
\end{equation}
The upper bound in~\eqref{E:convergence_quant1} then holds trivially from the fact that~$\hat{d}(\bmu,\blambda_j) \leq \hat{D}^\star$ for all~$(\bmu,\blambda_j)$. Then, from the characteristics of the approximate minimizer~$\btheta^{(t)} = \btheta^\dagger(\bmu^{(t)},\blambda^{(t)})$ in Assumption~\ref{A:oracle} we obtain that
\begin{equation}\label{E:convergence_bounded}
	\hat{L}\Big( \btheta^{(t)},\bmu^{(t)},\blambda^{(t)} \Big)
		\leq \hat{D}^\star + \rho
		\text{,}\quad \text{for all } t \geq 0
		\text{.}
\end{equation}

For the lower bound, we rely on the following relaxation of Dankin's classical theorem~\cite[Ch.~3]{Bertsekas15c}:

\begin{lemma}\label{T:subgrad_approx}

Let~$\btheta^\dagger$ be the approximate minimizer of the empirical Lagrangian~\eqref{E:empirical_lagrangian} at~$(\bmu,\blambda_j)$ from Assumption~\ref{A:oracle}. Then, the constraint slacks are \emph{approximate} subgradients of the dual function~\eqref{E:empirical_dual_function2}, i.e.,
\begin{equation}\label{E:subgrad_approx}
\begin{aligned}
	\hat{d}(\bmu,\blambda_j) &\geq \hat{d}(\bmu^\prime,\blambda_j^\prime)
		+ \sum_{i = 1}^m (\mu_i - \mu_i^\prime) \left[ \frac{1}{N_i} \sum_{n_i = 1}^{N_i}
			\ell_i\big( f_{\btheta^\dagger}(\bx_{n_i}), y_{n_i} \big) - c_i \right]
	\\
	{}&+ \sum_{j = m+1}^{m+q} \left[ \frac{1}{N_j} \sum_{n_j = 1}^{N_j}
			\left( \lambda_{j,n_j} - \lambda_{j,n_j}^\prime \right)
				\left( \ell_j\big( f_{\btheta^\dagger}(\bx_{n_j}), y_{n_j} \big) - c_j \right)
		\right]
			- \rho
\end{aligned}
\end{equation}
for all~$(\bmu^\prime,\blambda_j^\prime)$.
\end{lemma}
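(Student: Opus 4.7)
The plan is to mimic the textbook proof of Danskin's theorem, carrying the approximation tolerance $\rho$ from Assumption~\ref{A:oracle} through as an additive constant. The key structural observation is that, although $\hat{d}$ is nonconvex in the primal variable and need not be differentiable in $(\bmu,\blambda_j)$, the empirical Lagrangian $\hat{L}(\btheta,\cdot,\cdot)$ is \emph{affine} in the dual variables for each fixed $\btheta$. Danskin-type arguments exploit exactly this structure, and a near-minimizer produces a near-subgradient.

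First, I would start from the definition $\hat{d}(\bmu^\prime,\blambda_j^\prime) = \min_{\btheta} \hat{L}(\btheta,\bmu^\prime,\blambda_j^\prime)$, which yields the one-line upper bound
\begin{equation*}
	\hat{d}(\bmu^\prime,\blambda_j^\prime) \;\leq\; \hat{L}(\btheta^\dagger,\bmu^\prime,\blambda_j^\prime)
\end{equation*}
valid for any particular choice of~$\btheta$, in particular for the approximate minimizer~$\btheta^\dagger = \btheta^\dagger(\bmu,\blambda_j)$ produced by the oracle at $(\bmu,\blambda_j)$.

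Second, I would use the affinity of $\hat{L}(\btheta^\dagger,\cdot,\cdot)$ in the dual variables to decompose
\begin{equation*}
	\hat{L}(\btheta^\dagger,\bmu^\prime,\blambda_j^\prime)
	= \hat{L}(\btheta^\dagger,\bmu,\blambda_j)
	+ \sum_{i=1}^m (\mu_i^\prime - \mu_i) s_i
	+ \sum_{j=m+1}^{m+q} \frac{1}{N_j}\sum_{n_j=1}^{N_j}
		(\lambda_{j,n_j}^\prime - \lambda_{j,n_j})\, s_{j,n_j},
\end{equation*}
where $s_i$ and $s_{j,n_j}$ are exactly the empirical constraint slacks evaluated at $\btheta^\dagger$ that appear in~\eqref{E:subgrad_approx}. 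This step is purely algebraic; it just reads off the coefficients of $\bmu^\prime$ and $\blambda_j^\prime$ in~\eqref{E:empirical_lagrangian}.

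Third, I would invoke Assumption~\ref{A:oracle} to pass from the Lagrangian at $\btheta^\dagger$ to the dual function, namely
\begin{equation*}
	\hat{L}(\btheta^\dagger,\bmu,\blambda_j) \;\leq\; \hat{d}(\bmu,\blambda_j) + \rho.
\end{equation*}
Substituting this into the previous display and rearranging (swapping the roles of $\mu_i$ and $\mu_i^\prime$, and likewise for $\lambda_{j,n_j}$) yields~\eqref{E:subgrad_approx}. No step here is an obstacle of any real depth; the only subtle point is to make sure the affine expansion is performed \emph{before} applying the $\rho$-optimality, since doing it the other way round would force one to control how the approximate minimizer changes with the dual variables, which is exactly what we are trying to avoid. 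In particular, we do not need $\btheta^\dagger$ to be unique, stable, or even measurable in $(\bmu,\blambda_j)$; only its defining inequality is used.
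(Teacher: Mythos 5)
Your proposal is correct and follows essentially the same route as the paper's proof: bound $\hat{d}(\bmu^\prime,\blambda_j^\prime)$ from above by evaluating the Lagrangian at the oracle's $\btheta^\dagger(\bmu,\blambda_j)$, exploit the affinity of $\hat{L}(\btheta^\dagger,\cdot,\cdot)$ in the dual variables so the $\ell_0$ terms cancel, and apply the $\rho$-optimality from Assumption~\ref{A:oracle}. Your remark about performing the affine expansion before invoking $\rho$-optimality (so that no regularity of the map $(\bmu,\blambda_j)\mapsto\btheta^\dagger$ is needed) is exactly the point the paper's argument relies on implicitly.
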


\begin{proof}

From Assumption~\ref{A:oracle}, we obtain that
\begin{equation}\label{E:subgrad_ineq1}
	d(\bmu^\prime,\blambda_j^\prime) \leq d(\bmu^\prime,\blambda_j^\prime) + d(\bmu,\blambda_j)
		- \hat{L} \big( \btheta^\dagger(\bmu^,\blambda_j),
			\bmu,\blambda_j \big)
		+ \rho
		\text{.}
\end{equation}
Additionally, we can upper bound~\eqref{E:subgrad_ineq1} by replacing the optimal minimizer in~$d(\bmu^\prime,\blambda_j^\prime)$ by any~$\btheta$. In particular, we can choose~$\btheta^\dagger(\bmu,\blambda_j)$ to get
\begin{equation}\label{E:subgrad_ineq2}
	d(\bmu^\prime,\blambda_j^\prime) \leq
		\hat{L} \big( \btheta^\dagger(\bmu,\blambda_j),\bmu^\prime,\blambda_j^\prime \big)
		+ d(\bmu,\blambda_j)
		- \hat{L} \big( \btheta^\dagger(\bmu,\blambda_j),
			\bmu,\blambda_j \big)
			+ \rho
		\text{.}
\end{equation}
Notice from~\eqref{E:empirical_lagrangian} that the first term of the Lagrangians in~\eqref{E:subgrad_ineq2} are identical. By expanding them, \eqref{E:subgrad_ineq2} can then be rearranged as in~\eqref{E:subgrad_approx}.
\end{proof}

To proceed, let~$(\bmu^\star,\blambda_j^\star)$ be solutions of the dual problem~\eqref{P:csl_empirical_dual}. We show next that for at least~$T = O(1/\beta)$, the total distance
\begin{equation}\label{E:lyapunov}
	U_t = \norm{\bmu^{(t)} - \bmu^\star}^2
		+ \sum_{j = m+1}^{m+q} \norm{\blambda_j^{(t)} - \blambda^\star}^2
\end{equation}
decreases by at least~$O(\beta)$. To do so, use the updates from Algorithm~\ref{L:primal_dual} to write~\eqref{E:lyapunov} as
\begin{multline*}
	U_t = \sum_{i = 1}^m \Bigg\{ \Bigg[ \mu_i^{(t-1)} + \eta \bigg(
		\frac{1}{N_i} \sum_{n_i = 1}^{N_i}
			\ell_i\big( f_{\btheta^{(t-1)}}(\bx_{n_i}), y_{n_i} \big) - c_i
		\bigg) \Bigg]_+ - \mu_i^\star \Bigg\}^2
	\\
	{}+ \sum_{j = m+1}^{m+q}
		\sum_{n_j = 1}^{N_j} \bigg\{ \Big[ \lambda_{j,n_j}^{(t-1)} + \frac{\eta}{N_j} \Big(
			\ell_j\big( f_{\btheta^{(t-1)}}(\bx_{n_j}), y_{n_j} \big) - c_j
		\Big) \Big]_+ - \lambda_{j,n_j}^\star \bigg\}^2
		\text{.}
\end{multline*}
Since both~$\bmu^\star$ and~$\blambda^\star$ belong to the non-negative orthant, we can then use the non-expansiveness of the projection~$[\cdot]_+$~\cite{Bertsekas09c} to obtain
\begin{multline}\label{E:algorithm_update}
	U_t = \sum_{i = 1}^m \Bigg[ \mu_i^{(t-1)} + \eta \bigg(
		\frac{1}{N_i} \sum_{n_i = 1}^{N_i}
			\ell_i\big( f_{\btheta^{(t-1)}}(\bx_{n_i}), y_{n_i} \big) - c_i
		\bigg) - \mu_i^\star \Bigg]^2
	\\
	{}+ \sum_{j = m+1}^{m+q}
		\sum_{n_j = 1}^{N_j} \bigg[ \lambda_{j,n_j}^{(t-1)} + \frac{\eta}{N_j} \Big(
			\ell_j\big( f_{\btheta^{(t-1)}}(\bx_{n_j}), y_{n_j} \big) - c_j
		\Big) - \lambda_{j,n_j}^\star \bigg]^2
		\text{.}
\end{multline}

By expanding the norms in~\eqref{E:algorithm_update}, we get that
\begin{multline}\label{E:algorithm_update2}
	U_t \leq U_{t-1}
		+ 2 \eta \Bigg[ \sum_i \left( \mu_i^{(t-1)} - \mu_i^\star \right) \Bigg(
			\frac{1}{N_i} \sum_{n_i = 1}^{N_i}
				\ell_i\big( f_{\btheta^{(t-1)}}(\bx_{n_i}), y_{n_i} \big) - c_i
		\Bigg)
	\\
	{}+ \sum_j \sum_{n_j = 1}^{N_j}
		\frac{1}{N_j} \left( \lambda_{j,n_j}^{(t-1)} - \lambda_{j,n_j}^\star \right) \Big(
			\ell_j\big( f_{\btheta^{(t-1)}}(\bx_{n_j}), y_{n_j} \big) - c_j
		\Big)
		\Bigg]
	\\
	{}+ \eta^2 \Bigg[ \sum_{i = 1}^m \bigg[
		\frac{1}{N_i} \sum_{n_i = 1}^{N_i}
			\ell_i\big( f_{\btheta^{(t-1)}}(\bx_{n_i}), y_{n_i} \big) - c_i
		\bigg]^2
	+  \sum_{j = m+1}^{m+q} \sum_{n_j = 1}^{N_j}
		\frac{1}{N_j^2} \Big[ \ell_j\big( f_{\btheta^{(t-1)}}(\bx_{n_j}), y_{n_j} \big) - c_j \Big]^2
		\Bigg]
		\text{.}
\end{multline}
Using the fact that the~$\ell_i$ are bounded~(Assumption~\ref{A:losses}), the last term in~\eqref{E:algorithm_update2} is upper bounded by
\begin{equation*}
	S = \sum_{i = 1}^m (B - c_i)^2 + \sum_{j = m+1}^{m+q} \frac{1}{N_j} (B - c_j)^2 = O\big( B^2 \big)
		\text{.}
\end{equation*}
What is more, Lemma~\ref{T:subgrad_approx} can be used to bound the second term in~\eqref{E:algorithm_update2} and write
\begin{equation*}
	U_t \leq U_{t-1}
		+ 2 \eta \bigg[ \hat{d}\Big( \bmu^{(t-1)},\blambda_j^{(t-1)} \Big)
			- \hat{D}^\star + \rho \bigg] + \eta^2 S
		\text{,}
\end{equation*}
where we used the fact that~$\hat{D}^\star = \hat{d}\big( \bmu^\star,\blambda_j^\star \big)$. Solving the recursion then yields
\begin{equation}\label{E:algorithm_update3}
	U_t \leq U_{0} + 2 \eta \sum_{t = 0}^{t-1} \Delta_t
		\text{,}
\end{equation}
for
\begin{equation}\label{E:delta_t}
	\Delta_t = \hat{d}\Big( \bmu^{(t-1)},\blambda_j^{(t-1)} \Big) - \hat{D}^\star
		+ \rho + \frac{\eta}{2} S
		\text{.}
\end{equation}

To conclude, notice that~$\hat{d}(\bmu,\blambda_j) \leq \hat{D}^\star$ for all~$(\bmu,\blambda_j)$. Hence, when~$\bmu^{(t)}$ and~$\blambda_j^{(t)}$ are sufficiently far from the optimum and the step size~$\eta$ is sufficiently small, we have~$\Delta_t \leq 0$ and~\eqref{E:algorithm_update3} shows that the distance to the optimum~$U_t$ decreases. Formally, fix a precision~$\beta > 0$ and let~$T = \min \{ t \mid \Delta_t > -\beta\}$. Then, from the definition of~$\Delta_t$ we obtain the desired lower bound
\begin{equation*}
	\Delta_{T} > -\beta \Leftrightarrow \hat{d}\Big( \bmu^{(t-1)},\blambda_j^{(t-1)} \Big) > \hat{D}^\star - \rho - \frac{\eta}{2} S -\beta
\end{equation*}
What is more, \eqref{E:algorithm_update3} yields
\begin{equation*}
	T \leq \frac{U_{0}}{2 \eta \beta} + 1 = O\big( \beta^{-1} \big)
		\text{.}
\end{equation*}
\hfill$\square$

\section{Numerical experiments: additional details}
\label{X:sims}

\subsection{Invariance and fair learning}

\begin{table}\label{B:adult}
\caption{Preprocessing of the Adult dataset}
\centering
\begin{tabular}{lp{0.7\columnwidth}}
\hline
Variable names & Transformation \\
\hline
fnlwgt & Dropped \\
educational-num & Dropped \\
relationship & Dropped \\
capital-gain & Dropped \\
capital-loss & Dropped \\
education & Grouped the levels Preschool, 1st-4th, 5th-6th, 7th-8th, 9th, 10th, 11th, 12th
\\
race & Grouped the levels Other and Amer-Indian-Eskimo \\
marital-status & Grouped the levels Married-civ-spouse, Married-AF-spouse, Married-spouse-absent\\
marital-status & Grouped the levels Divorced, Separated\\
race & Grouped the levels Other and Amer-Indian-Eskimo\\
native-country &
Grouped the levels Columbia, Cuba, Guatemala, Haiti, Ecuador, El-Salvador, Dominican-Republic, Honduras, Jamaica, Nicaragua, Peru, Trinadad\&Tobago
\\
native-country &
Grouped the levels England, France, Germany, Greece, Holand-Netherlands, Hungary, Italy, Ireland, Portugal, Scotland, Poland, Yugoslavia
\\
native-country &
Grouped the levels Cambodia, Laos, Philippines, Thailand, Vietnam
\\
native-country &
Grouped the levels China, Hong, Taiwan
\\
native-country &
Grouped the levels United-States, Outlying-US(Guam-USVI-etc), Puerto-Rico
\\
age & Binned by quantiles (6 bins)\\
hours-per-week & Binned levels into less than 40 and more than 40\\
\hline
\end{tabular}
\end{table}

We begin with our analysis of the Adult dataset~\cite{Dua17u}, in which our goal is to predict whether an individual makes more than~US\$~50,000.00 while being insensitive to gender. The transformations performed on the data are listed in Table~\ref{B:adult}. We use a neural network with two outputs and a single hidden-layer with 64 nodes using a sigmoidal activation function. The output is encoded into a probability using a softmax transformation~($f_{\btheta}: \calX \to [0,1]^2$). Using this parametrization, we then pose the constrained learning problem
\begin{prob}\label{P:X_fairness}
	\minimize_{\btheta \in \setR^p}&
		&&\E\!\Big[ \ell_0\big( f_{\btheta}(\bx),y \big) \Big]
	\\
	\subjectto& &&\dkl\! \big( f_{\btheta}(\bx,z) \,\Vert\, f_{\btheta}(\bx,1-z) \big) \leq c
		\text{,}
\end{prob}
where~$z$ is the variable gender~(encoded~$0$ for female and~$1$ for male) and~$\ell_0$ is the negative logistic log-likelihood, i.e., $-\log\left( \left[ f_{\btheta}(\bx) \right]_y \right)$. To solve~\eqref{P:X_fairness}, we use ADAM~\cite{Kingma17a} for step~3 of Algorithm~\ref{L:primal_dual}, with batch size~$128$ and learning rate~$0.1$. All other parameters were kept as in the original paper. After each epoch, we update the dual variables~(step~4), also using ADAM with a step size of~$0.01$. We take~$c = 10^{-3}$. Both classifiers were trained over~$300$ epochs.

Without the constraint in~\eqref{P:X_fairness}, the resulting classifier is quite sensitive to gender: its prediction would changes for approximately~$8\%$ of the test samples if their gender were reversed~(Figure~\ref{F:adult_sensitivity}). With the pointwise constraint, the classifier becomes insensitive to the protected variable in~$99.9\%$ of the test set, which is on the order of~$1/\sqrt{N} \approx 0.008$. While the less strict ACE can also be imposed, it leads to slightly more sensitive classifiers~(for~$c = 5\times10^{-4}$, the classifier changes prediction in~$0.2\%$ of the test set).

As we mention in the main text, due to the bound on the duality gap, the dual variables of~\eqref{P:X_fairness} obtained in Algorithm~\ref{L:primal_dual} have a sensitivity interpretation: the larger their value, the harder the constraint is to satisfy~\cite{Bertsekas09c}. Almost~$96\%$ of the dual variables are zero after convergence, meaning that the constraint was tight for only~$4\%$ of the individuals. In Figure~\ref{F:adult_lambdas}a, we show the distribution of~$\lambda > 0$ over the Adult training set. If we analyze the group with the largest dual variables~(the $80\%$ percentile to be exact), we find a significantly higher prevalence of married individuals, non-white, non-US natives, and with a Masters degree~(Figure~\ref{F:adult_lambdas}b). Clearly, while attempting to control for gender invariance, the constrained learner also had to overcome other prejudices correlated to sexism in the dataset.

\begin{figure}[tb]
\centering
\includegraphics[width=0.5\columnwidth]{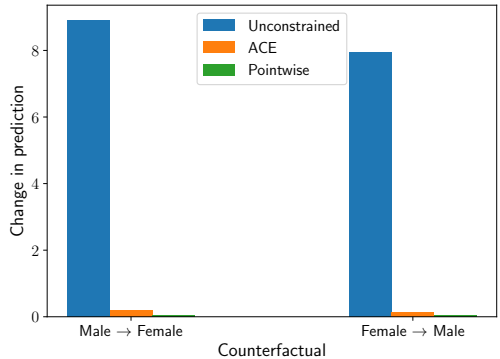}
	\caption{Classifier sensitivity on the Adult test set.}
	\label{F:adult_sensitivity}
\end{figure}

\begin{figure}[tb]
\begin{minipage}[c]{0.48\columnwidth}
\centering
\includegraphics[width=\columnwidth]{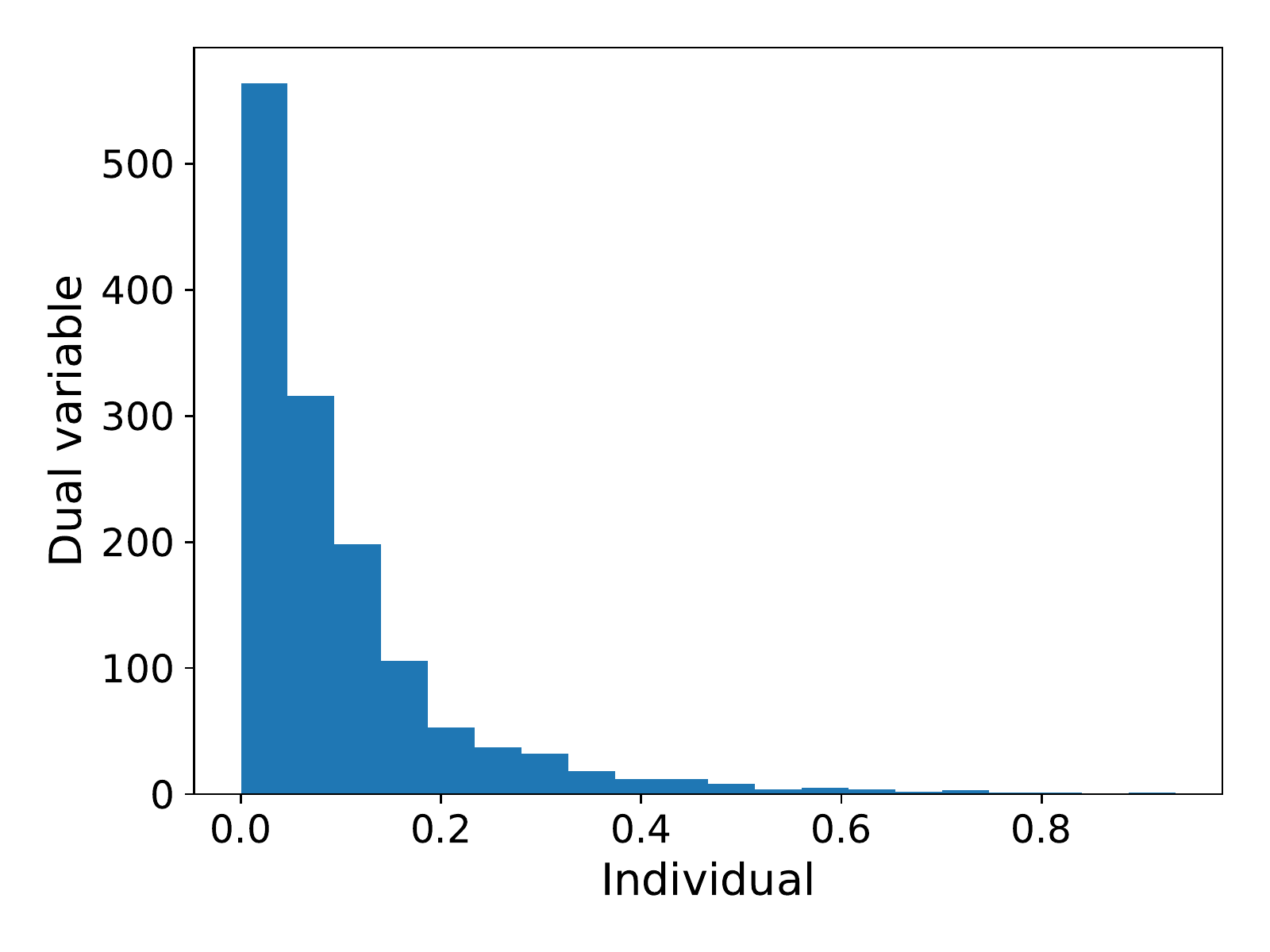}

{\small \hspace{0.7cm} (a)}
\end{minipage}
\hfill
\begin{minipage}[c]{0.48\columnwidth}
\centering
\includegraphics[width=\columnwidth]{adult_nn_hard_examples}

{\small \hspace{0.7cm} (b)}
\end{minipage}
\caption{Dual variable analysis for Adult dataset: (a)~distribution of the dual variables values and (b)~prevalence of different groups among the~$20\%$ training set examples with largest dual variables.}
	\label{F:adult_lambdas}
\end{figure}

\begin{table}[tb]\label{B:compas}
\caption{Preprocessing of the COMPAS dataset}
\centering
\begin{tabular}{lp{0.7\columnwidth}}
\hline
Variable names & Transformation \\
\hline
age\_cat & Dropped \\
is\_recid & Dropped \\
is\_violent\_recid & Dropped \\
score\_text & Dropped \\
v\_score\_text & Dropped \\
decile\_score & Dropped \\
v\_decile\_score & Dropped \\
race & Grouped the levels Other, Asian, Native American \\
age & Binned by quantiles (5 bins)
\\
priors\_count & Binned levels into 0, 1, 2, 3, 4, and more than 4
\\
juv\_misd\_count & Binned levels into 0, 1, and more than 1
\\
juv\_other\_count & Binned levels into 0, 1, and more than 1
\\
\hline
\end{tabular}
\end{table}

This situation even clearer in the COMPAS dataset. Here, the goal is to predict recidivism based on an individual's past offense data~(see Table~\ref{B:compas} for details on the data processing). We use the same neural network as before trained over~$400$ iterations using a similar procedure, but with batch size~$256$, primal learning rate~$0.1$, and dual variables learning rate~$2$~(halved every 50 iterations). Unconstrained, it reaches an accuracy of almost~$70\%$, but is sensitive to both gender, race, and gender~$\times$~race~(Table~\ref{B:compas_results}). By including ACE constraints on these counterfactuals, we obtain a classifier that is now invariant to these variables.

Once again, the value of the dual variables capture insights into the different forms of biases existing in the dataset~(Figure~\ref{F:compas_mus}). If we do not include constraints on the cross-term counterfactuals, then the hardest constraint to satisfy is the gender-invariant one. Invariance to the Caucasian-Hispanic and Hispanic:Other counterfactuals is effectively ``implied'' by the other constraints, since their dual variables vanish. If we include all~$13$ counterfactuals, i.e., add the cross-terms between gender and race, then the cross-terms dominate the satisfaction difficulty, with the Male/Female~$\times$~African-American/Caucasian dichotomy dominating over all others. What is interesting, however, is that the dual variable for the African-American/Caucasian counterfactual does not vanish, indicating the existence of a gender-independent race bias in the dataset. This does not occur with other combinations of the race factor. This type of combinatorial~(gerrymandering) fairness is a serious challenge in fair classification~\cite{Kearns18p}.

\subsection{Robust learning}

\begin{table}[tb]\label{B:compas_results}
\caption{Classifier insensitivity on the COMPAS dataset}
\centering
\begin{tabular}{lcc}
\hline
Counterfactual & Unc. (Acc: $69.4\%$) & ACE (Acc: $67.9\%$) \\
\hline
Male $\leftrightarrow$ Female & $21.4\%$ & $0\%$\\
African-American $\leftrightarrow$ Caucasian & $10.86\%$ & $0\%$\\
African-American $\leftrightarrow$ Hispanic & $14.32\%$ & $0.02\%$\\
African-American $\leftrightarrow$ Other & $11.38\%$ & $0\%$\\
Caucasian $\leftrightarrow$ Hispanic & $9.11\%$ & $0\%$\\
Caucasian $\leftrightarrow$ Other & $6.54\%$ & $0\%$\\
Hispanic $\leftrightarrow$ Other & $3.08\%$ & $0\%$\\
Male $\leftrightarrow$ Female + African-American $\leftrightarrow$ Caucasian & $28.84\%$ & $0.02\%$\\
Male $\leftrightarrow$ Female + African-American $\leftrightarrow$ Hispanic & $27.47\%$ & $0\%$\\
Male $\leftrightarrow$ Female + African-American $\leftrightarrow$ Other & $29.17\%$ & $0\%$\\
Male $\leftrightarrow$ Female + Caucasian $\leftrightarrow$ Hispanic & $22.71\%$ & $0\%$\\
Male $\leftrightarrow$ Female + Caucasian $\leftrightarrow$ Other & $24.27\%$ & $0\%$\\
Male $\leftrightarrow$ Female + Hispanic $\leftrightarrow$ Other & $21.15\%$ & $0\%$\\
\hline
\end{tabular}
\end{table}

\begin{figure}[tb]
\centering
\includegraphics[width=0.6\columnwidth]{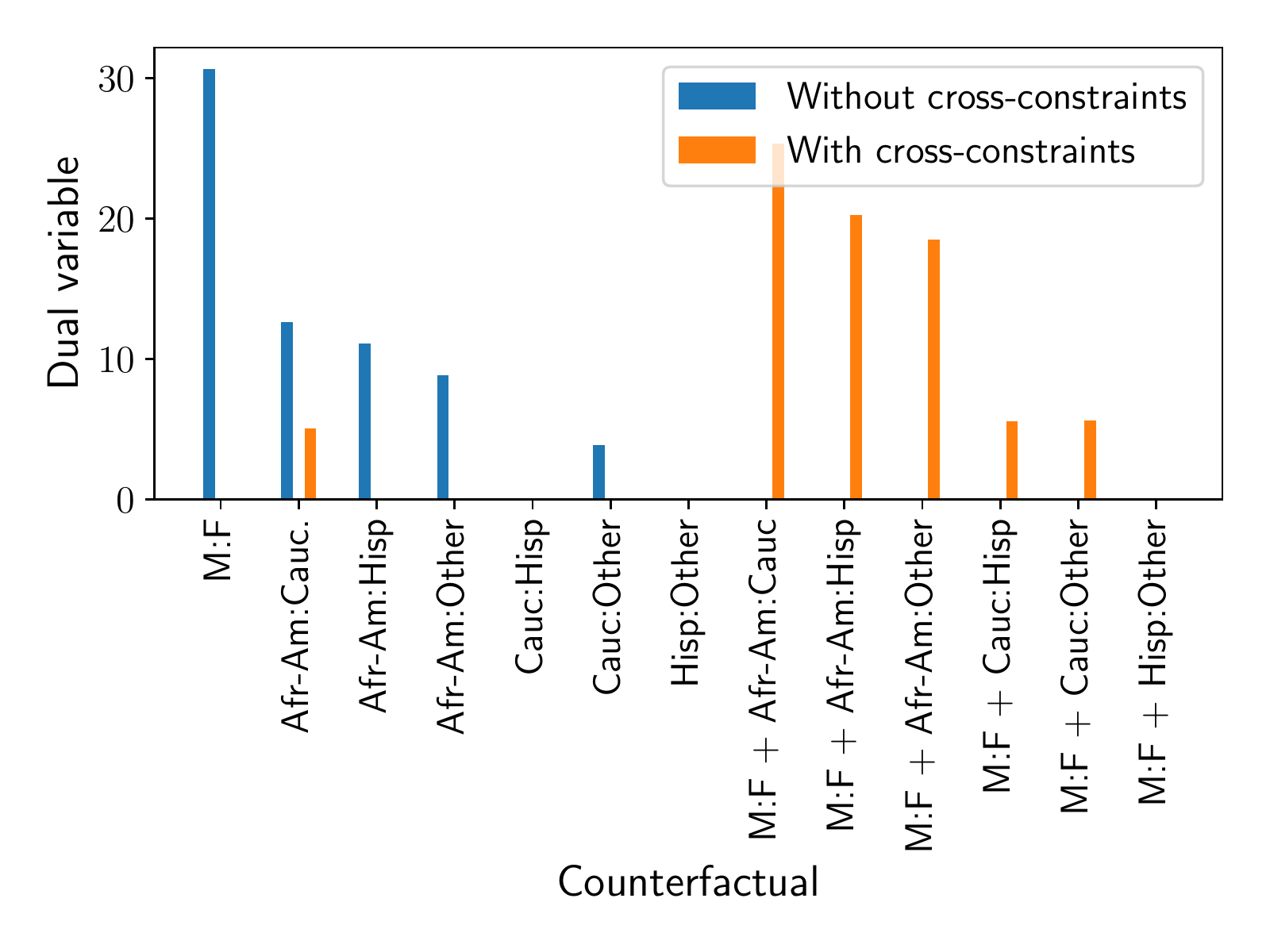}
	\caption{Dual variables of different counterfactual constraints for the COMPAS dataset.}
	\label{F:compas_mus}
\end{figure}

Although adversarial training has been successfully used to train robust ML models, it often leads to solutions with poor nominal performance, i.e., poor performance on original, clean data~\cite{Szegedy14i, Goodfellow14e, Huang15l, Madry18t, Sinha18c, Shaham18u}. To overcome this issue, \cite{Zhang19t} poses a constrained learning that explicitly trades-off nominal performance and performance against a worst-case perturbation. They propose an algorithm that optimizes over an upper bound of this robust constraint, leading to solutions that are simultaneously accurate on clean data and robust against input perturbations. Here, we follow a similar lead, but pose the problem as in~\eqref{P:robustness} for a given adversarial distribution~$\fkA$ instead of optimizing of the worst possible one. This distribution can then be tailored to provide a smooth performance degradation instead of a worst-case robustness one.

To be concrete, consider the problem of training a ResNet18~\cite{He16d} to classify images from the FMNIST dataset~\cite{Xiao17f}. We reserve~$100$ images from each class sampled at random for validation. When trained without constraints over~$100$ epochs using the ADAM optimizer with the settings in~\cite{Kingma17a} and batches of~$128$ images, it reaches it best accuracy over the validation set after~$67$ epochs. The nominal accuracy of this solution~(over the test set) is~$93.5\%$. However, when the input is attacked using PGD~\cite{Madry18t}, it fails to classify any of the test images for perturbation magnitudes as low as~$\varepsilon = 0.04$~(Figure~\ref{F:fmnist}a). In what follows, $\varepsilon$ indicates the maximum pixel modification allowed~($\ell_\infty$-norm of the perturbation) and we run the PGD attack using a step size of~$\varepsilon/30$ for~$50$ iterations and display the worst result over~$10$ restarts, unless stated otherwise.

A first attempt is then to use PGD with~$\varepsilon = 0.04$ to sample from a hypothetical adversarial distribution and constrain its performance against that distribution as in~\eqref{P:robustness}. Though the adversarial distribution is now dependent on the model~$\phi$, by using a smaller learning rate for the dual variables, $\phi$ can be considered almost static for the dual update and we have observed no instability issues in practice. To accelerate training, we use a much weaker attack running PGD without restarts for only~$5$ steps with step size~$\varepsilon/3$. Notice from Figure~\ref{F:fmnist}a that when training against~$\varepsilon = 0.04$~($c = 0.4$), the resulting classifier trades-off nominal performance~(now~$88\%$) for adversarial performance~(now~$85\%$). However, as the strength of the attack increases, the performance of the classifier deteriorates abruptly: for~$\varepsilon = 0.08$, it is down to~$9\%$. Increasing the training adversarial strength to~$\varepsilon = 0.1$~($c = 0.7$) yields a more robust classifier, albeit at the cost of a lower nominal accuracy~($84.6\%$). Still, the performance degradation remains quite abrupt.

This issue can be fixed by training against using a hierarchical adversarial distribution. Explicitly, we build the adversarial distribution~$\fkA$ as
\begin{equation}\label{E:hierarchical_epsilon}
	\Pr\left( \fkA \right) = \Pr\left( \fkA \mid \varepsilon \right) \Pr\left( \varepsilon \right)
		\text{,}
\end{equation}
where~$\Pr\left( \fkA \mid \varepsilon \right)$ is induced by an adversarial attack of magnitude at most~$\varepsilon$~(in our case, PGD) and~$\Pr\left( \varepsilon \right)$ denotes a prior distribution on the magnitude of the attacks. In Figure~\ref{F:fmnist}a we take~$\varepsilon \sim 0.25 \times \textup{Beta}(3,8)$~(Figure~\ref{F:fmnist}b). Notice that even though the mean value of the perturbation is approximately~$0.07$, the resulting classifier has a nominal performance close to~$87\%$ and retains a~$67\%$ accuracy for perturbations of magnitude up to~$0.12$.

Similar results are obtained when training a ResNet18~\cite{He16d} to classify images in the CIFAR-10 dataset. The training was performed as above, once again reserving~$100$ random images from each class sampled for validation. The unconstrained classifier trained over~$100$ epochs reached it best accuracy over the validation set after~$82$ epochs, which corresponds to a nominal test accuracy of~$85.4\%$. However, when the input is attacked using PGD~\cite{Madry18t}, the accuracy falls to~$5\%$ already for~$\varepsilon = 0.01$~(Figure~\ref{F:cifar}a). When using the fixed~$\varepsilon$ training method described above, we once again observe a trade-off between nominal accuracy and robustness. This can, however, be improved using the hierarchical training technique from~\eqref{E:hierarchical_epsilon}. Taking~$\varepsilon \sim 0.1 \times \textup{Beta}(3,10)$, such that~$\E \left[ \varepsilon \right] = 0.02$, we obtain the same nominal accuracy as for the fixed-$\varepsilon$, but improve the robustness for higher perturbation values.

\begin{figure}[tb]
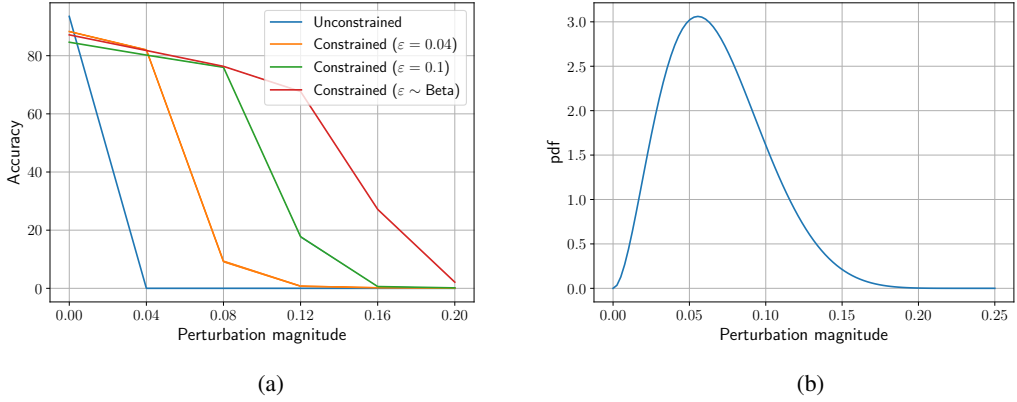

\begin{minipage}[c]{0.48\columnwidth}
\centering
\includegraphics[width=\columnwidth]{fmnist_accuracy}

{\small \hspace{0.7cm} (a)}
\end{minipage}
\hfill
\begin{minipage}[c]{0.48\columnwidth}
\centering
\includegraphics[width=\columnwidth]{fmnist_pdf}

{\small \hspace{0.7cm} (b)}
\end{minipage}
\caption{Robust constrained learning (FMNIST): (a)~Accuracy of classifiers under the PGD attack for different perturbation magnitudes and (b)~distribution of~$\varepsilon$ used during training.}
	\label{F:fmnist}
\end{figure}

\begin{figure}[tb]
\begin{minipage}[c]{0.48\columnwidth}
\centering
\includegraphics[width=\columnwidth]{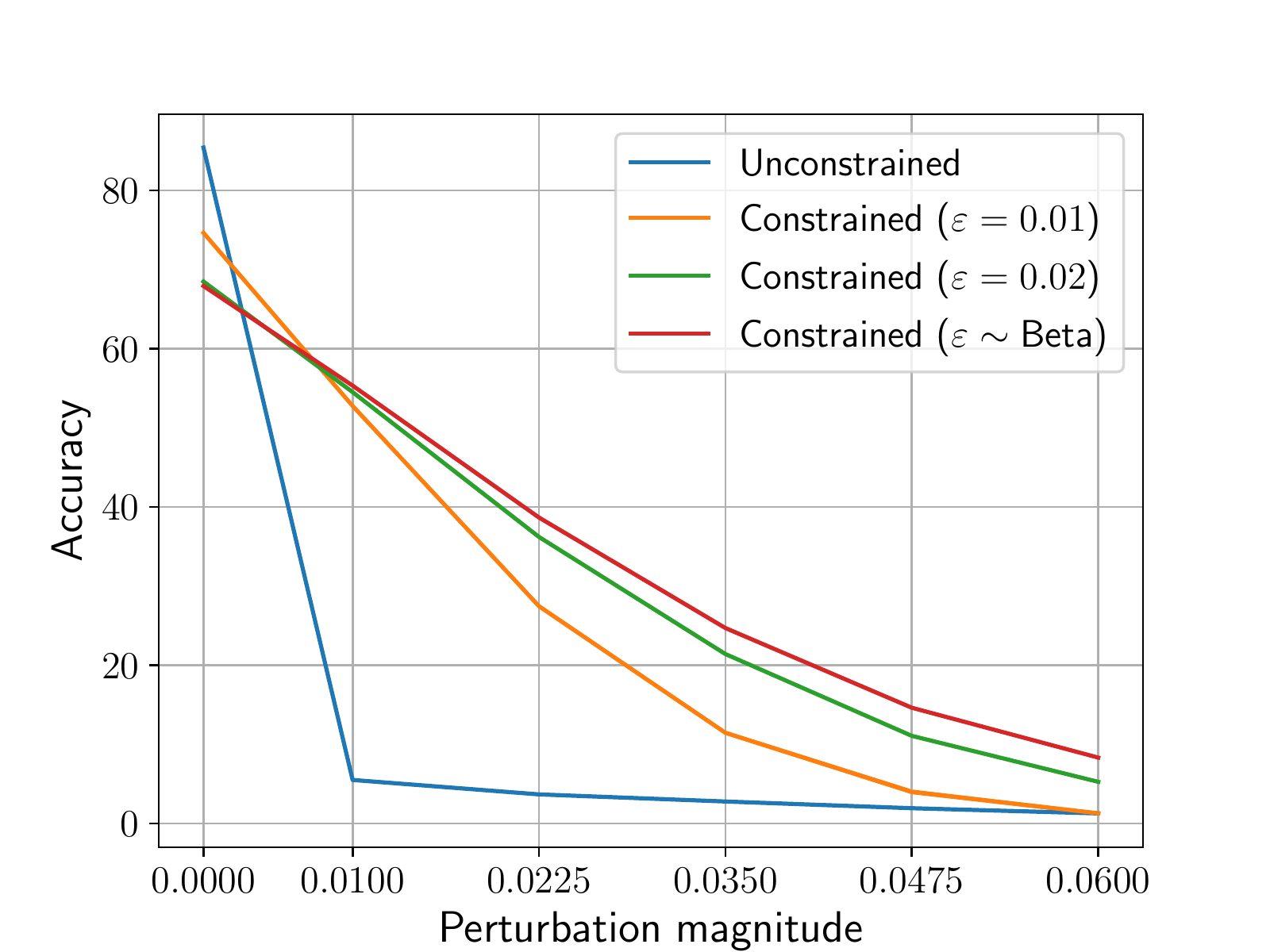}

{\small \hspace{0.7cm} (a)}
\end{minipage}
\hfill
\begin{minipage}[c]{0.48\columnwidth}
\centering
\includegraphics[width=\columnwidth]{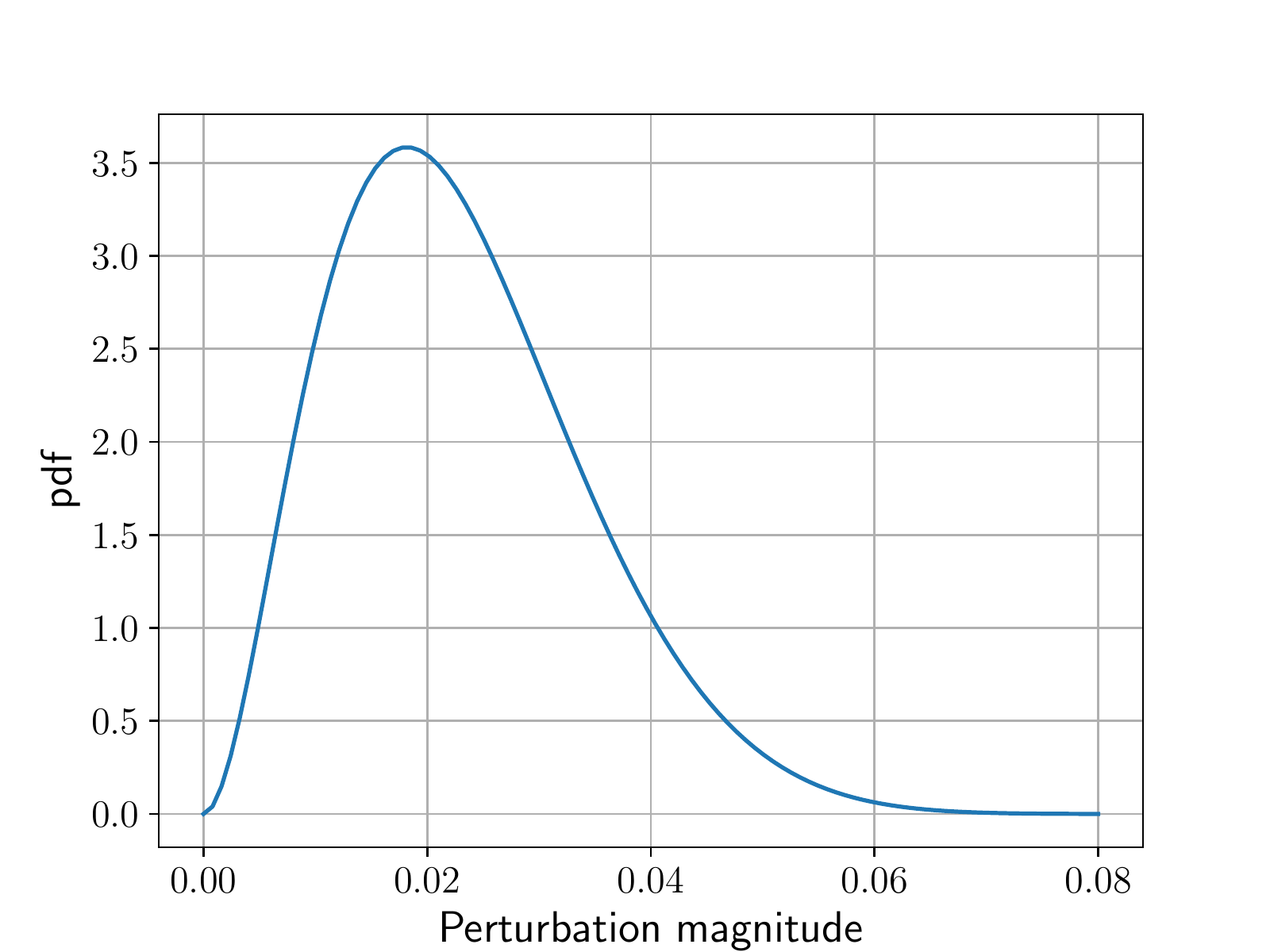}

{\small \hspace{0.7cm} (b)}
\end{minipage}
\caption{Robust constrained learning (CIFAR-10): (a)~Accuracy of classifiers under the PGD attack for different perturbation magnitudes and (b)~distribution of~$\varepsilon$ used during training.}
	\label{F:cifar}
\end{figure}

\fi

\end{document}